\newtheorem{theorem}{Theorem}
\newtheorem{claim}{Claim}
\newtheorem{lemma}{Lemma}
\newtheorem{remark}{Remark}
\newtheorem*{theoremK}{Khinchine's Inequality}
\def\ie{{\it i.e.~}}
\def\th{^{\mbox{\footnotesize{\it th}}}}
\def\ith{i^{\mbox{\footnotesize{\it th}}}}
\def\vecpt{\mvec{p}^t}
\def\vecp{\mvec{p}}
\def\vecx{\mvec{x}}
\def\veceta{\mvec{\eta}}
\def\E{\mathbb{E}}
\def\Bern{\mathrm{Bernoulli}}
\def\FPL{\mathsf{FPL}}
\def\DFPL{\mathsf{DFPL}}
\def\HYZ{\mathsf{HYZ}}
\def\FR{\mathrm{FR}}
\def\FP{\mathrm{FP}}
\def\reals{\mathbb{R}}
\def\ftpl{\mathsf{FPL}}
\def\LEF{\mathsf{LEF}}
\newcommand{\bsection}[1]{\vskip -2pt \section{#1}}
\newcommand{\vecP}{\mathbf{P}}
\newcommand{\vecQ}{\mathbf{Q}}
\newcommand{\alg}{\mathbf{A}}
\newcommand{\payoff}{\mathrm{Payoff}}
\newcommand{\remove}[1] {}
\newcommand{\br}[1]{{\color{blue} {\bf BR Edit}: \it #1}}
\newcommand{\mvec}[1]{{\bf #1}}
\newcommand{\eat}[1]{ }
\newcommand{\rfull}{R_{\mathrm{Full}}}
\title{Distributed Non-Stochastic Experts}
\author{
Varun Kanade\thanks{This research was carried out while the author was at
Harvard University supported in part by grant NSF-CCF-09-64401 }\\
UC Berkeley\\
\texttt{vkanade@eecs.berkeley.edu} \\
\and
Zhenming Liu\thanks{This research was carried out while the author was at
Harvard University supported in part by grants NSF-IIS-0964473 and
NSF-CCF-0915922} \\
Princeton University\\
\texttt{zhenming@cs.princeton.edu} \\
\and
Bo\v{z}idar Radunovi\'c \\
Microsoft Research\\
\texttt{bozidar@microsoft.com} \\
}
\begin{document}

\maketitle

\begin{abstract}
We consider the online distributed non-stochastic experts problem, where the
distributed system consists of one \emph{coordinator} node that is connected to
$k$ \emph{sites}, and the sites are required to communicate with each other via
the coordinator. At each time-step $t$, one of the $k$ site nodes has to pick
an expert from the set $\{1, \ldots, n\}$, and the same site receives
information about payoffs of all experts for that round. The goal of the
distributed system is to minimize regret at time horizon $T$, while
simultaneously keeping communication to a minimum.
The two extreme solutions to this problem are: (i) \emph{Full communication}:
This essentially simulates the non-distributed setting to obtain the optimal
$O(\sqrt{\log(n)T})$ regret bound at the cost of $T$ communication.
%
(ii) \emph{No communication}: Each site runs an independent copy -- the
regret is $O(\sqrt{\log(n)kT})$ and the communication is $0$.
%
This paper shows the difficulty of simultaneously achieving regret
asymptotically better than $\sqrt{kT}$ and communication better than $T$. We
give a novel algorithm that for an oblivious adversary achieves a non-trivial
trade-off: regret $O(\sqrt{k^{5(1 + \epsilon)/6}T})$ and communication
$O(T/k^{\epsilon})$, for any value of $\epsilon \in (0, 1/5)$.
We also consider a variant of the model, where the coordinator picks the
expert.  In this model, we show that the \emph{label-efficient} forecaster of Cesa-Bianchi et al. (2005)
already gives us strategy that is near optimal in regret vs communication trade-off.

\end{abstract}

\bsection{Introduction}
In this paper, we consider the well-studied non-stochastic expert problem in a
distributed setting. In the standard (non-distributed) setting, there are a
total of $n$ experts available for the decision-maker to consult, and at each
round $t = 1, \ldots, T$, she must choose to follow the \emph{advice} of one of
the experts, say $a^t$, from the set $[n] = \{1, \ldots, n\}$.  At the end of
the round, she observes a payoff vector $\vecp^t \in [0, 1]^n$, where
$\vecp^t[a]$ denotes the payoff that would have been received by following the
advice of expert $a$. The payoff received by the decision-maker is
$\vecp^t[a^t]$. In the \emph{non-stochastic} setting, an adversary decides
the payoff vectors at any time step. At the end of the $T$ rounds, the
\emph{regret} of the decision maker is the difference in the payoff that she
would have received using the single best expert at all times in hindsight, and
the payoff that she actually received, i.e. $R = \max_{a \in [n]} \sum_{t=1}^T
\vecpt[a] - \sum_{t=1}^T \vecpt[a^t]$. The goal here is to
minimize her regret; this general problem in the non-stochastic setting captures
several applications of interest, such as experiment design, online
ad-selection, portfolio optimization, etc. (See \cite{FS95, CBL06, Cov91, HK09,
Hazan12} and references therein.)

Tight bounds on regret for the non-stochastic expert problem are obtained by the
so-called follow the \emph{regularized} leader approaches; at time $t$, the
decision-maker chooses a distribution, $\vecx^t$, over the $n$ experts.  Here
$\vecx^t$ minimizes the quantity $\sum_{s=1}^{t-1} \vecp^t \cdot \vecx +
r(\vecx)$, where $r$ is a regularizer. Common regularizers are the entropy
function, which results in Hedge~\cite{FS95} or the exponentially weighted
forecaster~(see chap. 2 in \cite{CBL06}), or as we consider in this paper $r(x)
= \bar{\veceta} \cdot \vecx$, where $\bar{\veceta} \in_R [0, \eta]^n$ is a
random vector, which gives the follow the perturbed leader ($\ftpl$)
algorithm~\cite{KV05}.

We consider the setting when the decision maker is a distributed system, where
several different nodes may select experts and/or observe payoffs at different
time-steps. Such settings are common, e.g. internet search companies, such as
Google or Bing, may use several nodes to answer search queries and the
performance is revealed by user clicks. From the point of view of making better
predictions, it is useful to pool all available data. However, this may involve
significant communication which may be quite costly. Thus, there is an obvious
trade-off between cost of communication and cost of inaccuracy (because of not
pooling together all data), which leads to the question:
{\quote{\em
What is the \emph{explicit} trade-off between
the total amount of communication needed and the regret of the expert problem under
\emph{worst case} input?
}}
\section{Models and Summary of Results}
We consider a distributed computation model consisting of one central
\emph{coordinator} node connected to $k$ site nodes. The site nodes must
communicate with each other using the coordinator node. At each time step, the
distributed system receives a \emph{query}\footnote{We do not use the word query
in the sense of explicitly giving some information or context, but merely as
indication of occurrence of an event that forces some site or coordinator to
choose an expert. In particular, if any context is provided in the query the
algorithms considered in this paper ignore all context -- thus we are in the
\emph{non-contextual} expert setting.}, which indicates that it must choose an
expert to follow. At the end of the round, the distributed system observes the
payoff vector. We consider two different models described in detail below: the
\emph{site prediction model} where one of the $k$ sites receives a query at any
given time-step, and the \emph{coordinator prediction} model where the query is
always received at the coordinator node. In both these models, the payoff
vector, $\vecp^t$, is always observed at one of the $k$ site nodes.  Thus, some
communication is required to share the information about the payoff vectors
among nodes.  As we shall see, these two models yield different algorithms and
performance bounds.  \medskip \\

\noindent{\bf Goal}: The algorithm implemented on the distributed system may use
randomness, both to decide which expert to pick and to decide when to
communicate with other nodes.  We focus on simultaneously minimizing the
expected regret and the expected communication used by the (distributed)
algorithm.  Recall, that the expected regret is:
{\small
\begin{align}
\E[R] &= \E\left[\max_{a \in [n]} \sum_{t=1}^T  \vecpt[a] - \sum_{t=1}^T
\vecpt[a^t],
\right]
\end{align}
}
where the expectation is over the random choices made by the algorithm. The
expected communication is simply the expected number (over the random choices)
of messages sent in the system.

As we show in this paper, this is a challenging problem and to keep the analysis
simple we focus on bounds in terms of the number of sites $k$ and the time
horizon $T$, which are often the most important scaling parameters.  In
particular, our algorithms are variants of \emph{follow the perturbed leader}
($\ftpl$) and hence our bounds are not optimal in terms of the number of experts
$n$. We believe that the dependence on the number of experts in our algorithms
(upper bounds) can be strengthened using a different regularizer.  Also, all our
lower bounds are shown in terms of $T$ and $k$, for $n = 2$. For larger $n$,
using techniques similar to Theorem 3.6 in \cite{CBL06} should give the
appropriate dependence on $n$. \medskip \\

\noindent{\bf Adversaries}: In the non-stochastic setting, we assume that an
adversary may decide the payoff vectors, $\vecp^t$, at each time-step and also
the site, $s^t$, that receives the payoff vector (and also the query in the
site-prediction model).
An \emph{oblivious adversary} cannot see any of the actions of the distributed
system, i.e. selection of expert, communication patterns or any random bits
used. However, the oblivious adversary may know the description of the
algorithm.  In addition to knowing the description of the algorithm, an
\emph{adaptive adversary} is stronger and can record all of the past actions of
the algorithm, and use these arbitrarily to decide the future payoff vectors and
site allocations. \smallskip \\

\noindent{\bf Communication}: We do not explicitly account for message sizes.
However, since we are interested in scaling with $T$ and $k$, we do require that
message size should not depend on the number of sites $k$ or the number of
time-steps $T$, but only on the number of experts $n$.  In other words, we
assume that $n$ is substantially smaller than $T$ and $k$.  All the messages
used in our algorithms contain at most $n$ real numbers. As is standard in the
distributed systems literature, we assume that communication delay is $0$, i.e.
the updates sent by any node are received by the recipients before any future
query arrives. All our results still hold under the weaker assumption that the
number of queries received by the distributed system in the duration required to
complete a broadcast is negligible compared to $k$.~\footnote{This is because in
regularized leader like approaches, if the cumulative payoff vector changes by a
small amount the distribution over experts does not change much because of the
\emph{regularization} effect.}

We now describe the two models in greater detail, state our main results and
discuss related work: \medskip \\

\noindent{\sc {1. \textbf{Site Prediction Model}}}: At each time step $t = 1,
\ldots, T$, one of the $k$ sites, say $s^t$, receives a \emph{query} and has to
pick an expert, $a^t$, from the set, $[n] = \{1, \ldots, n\}$. The payoff vector
$\vecp^t \in [0, 1]^n$, where $\vecp^t[i]$ is the payoff of the $\ith$ expert is
revealed \emph{only to the site} $s^t$ and the decision-maker (distributed
system) receives payoff $\vecp^t[a^t]$, corresponding to the expert actually
chosen. The site prediction model is commonly studied in distributed machine
learning settings (see \cite{DGBSX11,DAW10,AD11}).  The payoff vectors, $\vecp^1, \ldots,
\vecp^T$, and also the choice of sites that receive the query, $s^1, \ldots,
s^T$, are decided by an adversary. There are two very simple
algorithms in this model: \medskip \\
(i) \emph{Full communication}: The coordinator always maintains the current
cumulative payoff vector, $\sum_{\tau=1}^{t-1} \vecp^\tau$. At time step $t$,
$s^t$ receives the current cumulative payoff vector $\sum_{\tau=1}^{t-1}
\vecp^{\tau}$ from the coordinator, chooses an expert $a^t \in [n]$ using
$\ftpl$, receives payoff vector $\vecp^t$ and sends $\vecpt$ to the
coordinator, which updates its cumulative payoff vector. Note that the total
communication is $2T$ and the system simulates (non-distributed)
$\ftpl$ to achieve (optimal) regret guarantee $O(\sqrt{nT})$. \smallskip \\
(ii) \emph{No communication}: Each site maintains cumulative payoff vectors
corresponding to the queries received by them, thus implementing $k$ independent
versions of $\ftpl$. Suppose that the $\ith$ site receives a total of $T_i$
queries ($\sum_{i=1}^k T_i = T$), the regret is bounded by $\sum_{i=1}^k
O(\sqrt{n T_i}) = O(\sqrt{nkT})$ and the total communication is $0$. This upper
bound is actually tight, as shown in Lemma \ref{lemma:sp-no-comm} (Appendix
\ref{app:sp-lb-zero}), in the event that there is $0$ communication.

Simultaneously achieving regret that is asymptotically lower than $\sqrt{knT}$
using communication asymptotically lower than $T$ turns out to be a
significantly challenging question.
Our main positive result is the first distributed expert algorithm in the
\emph{oblivious adversarial} (non-stochastic) setting, using \emph{sub-linear}
communication. Finding such an algorithm in the case of an adaptive adversary is
an interesting open problem.
\begin{theorem}\label{thm:main} When $T \geq 2 k^{2.3}$, there exists an
algorithm for the distributed experts problem that against an oblivious
adversary achieves regret $O(\log(n) \sqrt{k^{5(1+\epsilon)/6} T})$ and uses
communication $O(T/k^{\epsilon})$, giving non-trivial guarantees in the range
$\epsilon \in (0, 1/5)$.  \end{theorem}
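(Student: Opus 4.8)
The plan is to build a distributed variant of follow-the-perturbed-leader ($\ftpl$) and to bound its regret by the decomposition
\[
\E[R]\ \le\ \underbrace{\E[R_{\mathrm{central}}(\eta)]}_{\text{ideal }\ftpl\text{ that sees every }\vecp^\tau}\ +\ \underbrace{\sum_{t=1}^T \Pr[\text{site }s^t\text{ plays differently from ideal}]}_{\text{staleness penalty}} .
\]
The structural fact I would exploit is the stability of $\ftpl$: with perturbations of scale $\eta$, if the cumulative payoff vector used by a site differs from the true $\sum_{\tau<t}\vecp^\tau$ by an additive amount $\Delta$ (some coordinates off by up to $\Delta$), then the two induced distributions over experts are within $O(\Delta/\eta)$ in total variation (the dependence on $n$ being absorbed into the $\log n$ factor of the final bound for a suitable perturbation law). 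Hence the staleness penalty is at most $\frac1\eta\sum_t \Delta_{s^t}(t)$, where $\Delta_{s^t}(t)$ is the amount of not-yet-reported payoff mass site $s^t$ is missing when it answers query $t$. The free parameters are $\eta$ and the lengths of the communication "blocks", to be tuned against the budget $O(T/k^{\epsilon})$.

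The algorithm keeps a global cumulative payoff vector at the coordinator that lags the true one, plus a per-site buffer of payoffs the site has observed but not yet reported. It uses a communication schedule at (at least) two scales: a \emph{coarse} scale at which the coordinator broadcasts its state to all $k$ sites (cost $\Theta(k)$ messages per broadcast) and a \emph{fine} scale at which an individual site pushes its buffer up to the coordinator (cost $\Theta(1)$). Crucially, because the adversary is oblivious, the events that trigger communication — which in general depend on the realized payoffs and, against an adaptive adversary, would leak information through the timing — can be chosen (in part at random) without the adversary exploiting the pattern; formally, one fixes the sequences $\vecp^1,\dots,\vecp^T$ and $s^1,\dots,s^T$ first and then takes expectations over the algorithm's coins. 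Between communications a site acts on (its last received global state) $+$ (its own buffer, which it knows exactly), so the only thing a site is stale about is the payoff mass received at \emph{other} sites since its last refresh.

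The steps, in order, are: (1) state and prove the $\ftpl$-stability lemma (a union bound over the $n$ experts that the perturbation lands within $\Delta$ of a decision boundary; with an exponential/Laplace-type perturbation this gives the $\log n$ dependence claimed), and the standard centralized $\ftpl$ bound $\E[R_{\mathrm{central}}(\eta)] = O(\eta + (T/\eta)\log n)$; (2) fix the two-scale schedule and bound $\sum_t \Delta_{s^t}(t)$ — a combinatorial count of how $T$ queries distributed over $k$ sites, interleaved with broadcasts and pushes, can accumulate "unseen" payoff mass (this is where obliviousness licenses a randomized schedule and where the analysis of the two regimes below lives); (3) assemble $\E[R] \le O(\eta + (T/\eta)\log n + \frac1\eta \sum_t \Delta_{s^t}(t))$, express the communication in terms of the block lengths, and optimize $\eta$ and the block lengths subject to communication $= O(T/k^{\epsilon})$, which should produce the exponent $5(1+\epsilon)/6$ and the side condition $T \ge 2k^{2.3}$ (needed so the tuned block lengths are $\ge 1$ and the resulting regret is below the trivial $O(T)$ bound).

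The main obstacle is step (2) together with the optimization in step (3). A naive single-scale schedule that simply refreshes every site every $B$ rounds has per-site staleness $B$, total staleness $\Theta(TB)$, communication $\Theta(kT/B)$, and after optimizing $\eta\sim\sqrt{TB}$ yields regret only $O(\sqrt{k^{1+\epsilon}T})$ — the trivial interpolation, with exponent $1+\epsilon$ rather than $5(1+\epsilon)/6$. Beating this requires the multi-scale schedule and a sharper accounting of $\sum_t\Delta_{s^t}(t)$ that plays off the two regimes: when the adversary spreads queries thinly over many sites each individual site is only mildly stale (few of its own queries, so little accumulated penalty), whereas when queries concentrate the relevant sites are well-informed; quantifying and balancing this is the crux, and it is what drives the "cube-root-type" optimization responsible for the $6$ in $5(1+\epsilon)/6$. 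A secondary technical point is making the "oblivious adversary $\Rightarrow$ randomized communication is safe" reduction fully rigorous — verifying that both the regret bound and the communication bound hold in expectation over the algorithm's internal randomness against an arbitrary but fixed input — which is routine given obliviousness but should be stated carefully.
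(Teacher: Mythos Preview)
Your decomposition $\E[R]\le \E[R_{\mathrm{central}}(\eta)]+\frac1\eta\sum_t \Delta_{s^t}(t)$ is exactly the \emph{distributed counter} paradigm, and the paper proves (Theorem~\ref{thm:sp_lb_dc}) that this paradigm cannot deliver the claimed bound. Concretely: an oblivious adversary can pick $\beta\le k$ of the sites, split time into $T/\beta$ blocks of length $\beta$, flip one coin per block, and route each of the $\beta$ queries in a block to a distinct site. Then (i) any site's own buffer contains at most one payoff per block, so ``local information'' is useless, and (ii) any algorithm that predicts from a $\beta$-stale cumulative payoff vector plus local information has expected reward exactly $T/2$, hence regret $\Omega(\sqrt{\beta T})$. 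Since your budget $O(T/k^\epsilon)$ forces a per-broadcast gap of order $k^{1+\epsilon}$, the staleness $\beta$ that the adversary gets to exploit is $\Theta(k^{1+\epsilon})$, and you are stuck at $\Omega(\sqrt{k^{1+\epsilon}T})$ --- precisely the trivial interpolation you yourself flagged. The ``multi-scale schedule'' does not escape this: the lower-bound input already spreads queries maximally thinly over the $\beta$ active sites, so the ``few own queries $\Rightarrow$ little accumulated penalty'' regime and the ``concentration $\Rightarrow$ well-informed'' regime collapse into the same bad case.

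The paper's algorithm is not an approximation of centralized $\ftpl$ at all. It runs two different $\ftpl$ instances and randomly chooses, block by block, which one to play: with probability $q$ a \emph{step phase} (full-communication $\ftpl$ within the block, cost $\ell$), otherwise a \emph{block phase} (one arm for the whole block, chosen by a block-level $\ftpl$ with noise $\eta$, cost $k$). The point is not stability but \emph{sign}: on blocks where the globally best expert is locally worse, the step phase has \emph{negative} regret with respect to that expert, and this negative term (``term~1'' in inequality~(\ref{eq:regret-main})) is used to cancel the potentially large block-phase regret (``term~2''). A three-case analysis---best expert identified early; $|\vecP^i[1]-\vecP^i[2]|$ large often; $|\vecP^i[1]-\vecP^i[2]|$ small often---balances $q$, $\ell$, and $\eta=\ell^{5/12}T^{1/2}$ to give regret $O(\eta)=O(\sqrt{\ell^{5/6}T})$ and communication $O(qT+Tk/\ell)$; setting $\ell=k^{1+\epsilon}$ then yields the exponent $5(1+\epsilon)/6$. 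The $n$-expert extension is by recursion (pair the experts and run the $2$-expert algorithm on meta-experts), which is where the $\log n$ enters. None of this is recoverable from a staleness bound.
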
 \medskip
%
%

\noindent{\sc 2. \textbf{Coordinator Prediction Model}}: At every time step,
the query is received by the coordinator node, which chooses an expert $a^t \in
[n]$. However, at the end of the round, one of the site nodes, say $s^t$,
observes the payoff vector $\vecp^t$. The payoff vectors $\vecp^t$ and choice of
sites $s^t$ are decided by an adversary. This model is also a natural one and is
explored in the distributed systems and streaming literature (see \cite{CMY11,
HYZ12, LRV12} and references therein).

The \emph{full communication} protocol is equally applicable here getting
optimal regret bound, $O(\sqrt{nT})$ at the cost of substantial (essentially
$T$) communication. But here, we do not have any straightforward algorithms that
achieve non-trivial regret without using any communication.  This model is
closely related to the label-efficient prediction problem (see Chapter 6.1-3 in
\cite{CBL06}), where the decision-maker has a limited budget and has to spend
part of its budget to observe any payoff information. The optimal strategy is to
request payoff information randomly with probability $C/T$ at each time-step, if
$C$ is the communication budget. We refer to this algorithm as $\LEF$
(label-efficient forecaster) \cite{CBLS05}.
\begin{theorem} \cite{CBLS05} (Informal) The $\LEF$ algorithms using $\ftpl$ with communication budget $C$
achieves regret $O(T \sqrt{n/C})$ against both an adaptive and an oblivious
adversary.  \end{theorem}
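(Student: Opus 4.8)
The plan is to realize $\LEF$ in the coordinator-prediction model as exactly the label-efficient forecaster of \cite{CBLS05}, with the only change that a ``label request'' is implemented by the coordinator asking the site $s^t$ that observed $\vecpt$ to forward it. Concretely: at round $t$ the coordinator picks $a^t$ by running $\ftpl$ on the cumulative estimated payoff $\sum_{\tau<t}\tilde{\vecp}^{\tau}$; independently of everything else it draws $b^t\sim\Bern(C/T)$, and if $b^t=1$ it requests $\vecpt$ from $s^t$ and sets $\tilde{\vecp}^t=\frac{T}{C}\,b^t\,\vecpt$ (so $\tilde{\vecp}^t=\mvec 0$ when $b^t=0$). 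Each request is implemented with $O(1)$ messages and $\sum_t b^t$ is a $\mathrm{Binomial}(T,C/T)$ random variable, so the expected communication is $O(C)$ (and is $O(C)$ with high probability by a Chernoff bound). The only real work is the regret bound, which I would split into (a) the $\ftpl$ regret measured on the \emph{estimated} payoffs and (b) the price of replacing estimated payoffs by true ones.

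For (a), I would invoke the standard $\ftpl$ regret guarantee (Kalai--Vempala) applied to the sequence $\tilde{\vecp}^1,\dots,\tilde{\vecp}^T$. The point is that each $\tilde{\vecp}^t$ is either $\mvec 0$ or a vector all of whose entries lie in $[0,T/C]$, and only $\sum_t b^t\approx C$ of the rounds are non-zero, so the run behaves like $\ftpl$ on roughly $C$ rounds of payoffs of scale $T/C$; tuning the perturbation magnitude $\eta$ accordingly yields $\E\big[\max_a\sum_t\tilde{\vecp}^t[a]-\sum_t\tilde{\vecp}^t[a^t]\big]=O\big(T\sqrt{n/C}\big)$. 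This step is valid verbatim against an adaptive adversary, since the $\ftpl$ bound holds for arbitrary, possibly adaptively chosen, payoff sequences.

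For (b), I would first note that $\E[\tilde{\vecp}^t\mid\mathcal F^{t-1}]=\vecpt$, because the adversary commits $\vecpt$ and $s^t$ before round~$t$'s randomness is drawn and $\E[b^t]=C/T$; since $b^t$ is independent of the fresh $\ftpl$ perturbation used to select $a^t$, this gives $\E[\sum_t\tilde{\vecp}^t[a^t]]=\E[\sum_t\vecpt[a^t]]$, again against an adaptive adversary. It then remains to relate $\E[\max_a\sum_t\tilde{\vecp}^t[a]]$ to $\max_a\sum_t\vecpt[a]$. Against an \emph{oblivious} adversary this is immediate: the $\vecpt$ are fixed, $\E[\sum_t\tilde{\vecp}^t[a]]=\sum_t\vecpt[a]$ for every $a$, and Jensen's inequality gives $\E[\max_a\sum_t\tilde{\vecp}^t[a]]\ge\max_a\sum_t\vecpt[a]$. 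Against an \emph{adaptive} adversary $\max_a\sum_t\vecpt[a]$ is itself random and correlated with the algorithm, so one cannot interchange $\max$ and $\E$; instead, for each fixed $a$ the process $\sum_{t\le\tau}(\tilde{\vecp}^t[a]-\vecpt[a])$ is a martingale with increments bounded by $T/C$ and summed conditional variance at most $\frac{T}{C}\sum_t\vecpt[a]^2\le T^2/C$, so a Bernstein-type martingale concentration inequality (e.g.\ Freedman's) plus a union bound over the $n$ experts shows $\big|\sum_t\tilde{\vecp}^t[a]-\sum_t\vecpt[a]\big|=O(T\sqrt{\log(n)/C})$ simultaneously for all $a$ with high probability (the failure event contributes negligibly since payoffs are in $[0,T/C]$). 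Substituting this uniform deviation into the bound from (a) gives $\E[R]=O(T\sqrt{n/C})$ in both adversary models. The hard part is precisely this adaptive case of (b): because the comparator's cumulative payoff is adversarially entangled with the algorithm's choices, the clean Jensen step is unavailable and the argument must route through a high-probability, uniform-over-experts concentration bound for the importance-weighted estimates; making this yield the label-efficient rate requires simultaneously controlling the blown-up scale $T/C$ and the $\log n$ loss from the union bound, and checking that neither degrades the $T$ (rather than $\sqrt T$) dependence that is the whole point of the statement.
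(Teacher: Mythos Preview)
The paper does not give its own proof of this theorem: it is stated as a citation of the label-efficient forecaster result from~\cite{CBLS05} (see also Chapter~6 of~\cite{CBL06}), and is used only as a black box to motivate the coordinator-prediction model. So there is no paper proof to compare against.

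Your sketch is essentially the standard label-efficient analysis, adapted to $\ftpl$ rather than the exponentially weighted forecaster, and it is correct at the level of detail you give. The decomposition into (a) regret on the importance-weighted estimates $\tilde\vecp^t=\frac{T}{C}b^t\vecp^t$ and (b) a concentration step relating $\sum_t\tilde\vecp^t[a]$ to $\sum_t\vecp^t[a]$ uniformly over $a$ is exactly how~\cite{CBLS05} proceeds. Your variance computation for Freedman is right, and the resulting $O(T\sqrt{\log(n)/C})$ deviation is dominated by the $O(T\sqrt{n/C})$ term from the $\ftpl$ step, so nothing is lost. One small remark: in part~(a) you should be slightly careful that the $\ftpl$ perturbation scale $\eta$ is fixed in advance (as a function of $T$, $n$, $C$), not of the realized number of non-zero rounds $\sum_t b^t$; the Kalai--Vempala bound then applies deterministically to whatever sequence $\tilde\vecp^1,\dots,\tilde\vecp^T$ is realized, and you take expectations afterward. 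With that caveat, everything you wrote is the textbook route to this bound.
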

One of the crucial differences between this model and that of the
label-efficient setting is that when communication does occur, the site can send
cumulative payoff vectors comprising all previous updates to the coordinator
rather than just the latest one.  The other difference is that, unlike in the
label-efficient case, the sites have the knowledge of their local regrets and
can use it to decide when to communicate. However, our lower bounds for natural
types of algorithms show that these advantages probably do not help to get
better guarantees. \medskip \\

%
\noindent{\bf Lower Bound Results}:
In the case of an \emph{adaptive adversary}, we have an unconditional (for any
type of algorithm) lower bound in both the models:
\begin{theorem} \label{thm:adaptive} Let $n = 2$ be the number of experts. Then
any (distributed) algorithm that achieves expected regret $o(\sqrt{kT})$ must
use communication $(T/k)(1 - o(1))$.
\end{theorem}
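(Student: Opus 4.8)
The plan is to prove this lower bound via a reduction: I will exhibit an adaptive adversary strategy against which any algorithm using little communication must suffer $\Omega(\sqrt{kT})$ regret, and the key idea is that with an adaptive adversary the relevant "state" of the distributed system is exactly the cumulative payoff information that has been shared across sites. First I would partition the time horizon into phases, or alternatively argue directly on a continuous basis: at any point in time, consider the most recent time a site received payoff information synchronizing it with the global picture. If a site $s$ has not communicated recently, then from the coordinator's (and the other sites') point of view, the payoffs seen at $s$ since its last communication are "invisible," and the adversary — knowing the algorithm and all past actions — can route the next query to a site that is maximally out of date. The point is that an out-of-date site is effectively running an independent copy of an experts algorithm on a sub-sequence, and on any contiguous block of $m$ consecutive queries handled with stale information, an adaptive adversary can force $\Omega(\sqrt{m})$ regret on that block (this is the standard single-expert-problem lower bound for $n=2$, via a random walk / anticoncentration argument as in the classical experts lower bound).

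The core combinatorial step is a charging/bucketing argument. Suppose the total (expected) communication is $C$. Each message can "refresh" at most a bounded number of sites (in this coordinator-star topology, one broadcast touches all $k$ sites but costs $k$ messages, while a single point-to-point message refreshes one site), so the number of "refresh events" is $O(C)$. The adversary's strategy is: always send the query to the site whose information is most stale. I then want to show that under this adversarial routing, the execution decomposes into blocks such that the sum over blocks of $\sqrt{\text{block length}}$ is $\Omega(\sqrt{kT})$ whenever $C = o(T/k)$. Concretely, if there are $R = O(C)$ refresh events, they cut the timeline (per site, suitably interleaved) into segments; by convexity ($\sum \sqrt{m_i}$ is minimized when the $m_i$ are balanced), if the $k$ sites together process $T$ queries with only $R$ refreshes, the total regret is at least on the order of $\sqrt{kT} \cdot \sqrt{T/(kR \cdot (\text{something}))}$ — I would need to set up the accounting so that $R \le (T/k)(1-o(1))$ forces this to be $\omega(\sqrt{kT})$... wait, I need the contrapositive: regret $o(\sqrt{kT})$ forces $R = (T/k)(1-o(1))$. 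The clean way: if $C < (T/k)(1-\delta)$, then on average each site goes through stretches of length $\gg T/(kC) \cdot k = T/C > k/(1-\delta)$... let me instead argue that a constant fraction of the $T$ queries must land on sites that are "behind" by at least $\Omega(k)$ queries (since there are $k$ sites and fewer than $\sim T/k$ refreshes, the staleness summed appropriately is large), and bundle these into $\Theta(T/k)$ blocks each of length $\Theta(k)$, each contributing $\Omega(\sqrt{k})$ regret in expectation, for a total of $\Omega((T/k)\sqrt{k}) = \Omega(\sqrt{kT})$.

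The per-block regret lower bound itself is routine: fix a block of $m$ consecutive queries all answered by a site operating on information frozen at the block's start; the adversary sets payoffs $\vecp^t \in \{0,1\}^2$ i.i.d.\ uniform (or uses a slightly adaptive variant to handle the algorithm's randomness), and the algorithm's expected payoff is $m/2$ while $\E[\max(\sum \vecp^t[1], \sum \vecp^t[2])] = m/2 + \Omega(\sqrt{m})$ by anti-concentration of the random walk. Summing via Khinchine's inequality (stated in the preamble) or a direct Berry–Esseen estimate gives the clean constant. Crucially, because the adversary is adaptive, it can choose which site is queried *after* seeing whether that site communicated, so the frozen-information assumption is enforceable regardless of the algorithm's (possibly randomized) communication decisions — this is what makes the adaptive setting tractable and is exactly where an oblivious adversary proof would break down.

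**The main obstacle** I anticipate is making the bucketing rigorous when the algorithm's communication is randomized and history-dependent: the blocks are then themselves random, and I must be careful that the adversary's "query the stalest site" rule, combined with the random block lengths, still yields $\Omega(T/k)$ disjoint blocks of length $\Omega(k)$ in expectation. I expect to handle this by a stopping-time / martingale argument — defining a potential equal to the total staleness across sites, noting each query increases it by $1$ and each message decreases it by at most $1$ (or $k$ for a broadcast, but then it cost $k$), so if $C = o(T/k)$ the potential is typically $\Omega(T)$, hence $\Omega(T/k)$ sites are $\Omega(k)$-stale on average — and then extracting the disjoint blocks and applying the single-block bound blockwise with conditional expectations. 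The $n=2$ restriction keeps the per-block argument to the simplest possible random-walk calculation.
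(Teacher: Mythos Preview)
Your proposal has a genuine gap in the per-block step. You compute, for each block of $m$ stale queries with i.i.d.\ payoffs, a \emph{local} regret of $\Omega(\sqrt m)$: the algorithm earns $m/2$ in expectation while the best expert \emph{on that block} earns $m/2 + \Omega(\sqrt m)$. But the quantity you must lower-bound is the \emph{global} regret, and per-block regrets sum in the wrong direction: the sum of block-wise best-expert payoffs \emph{upper}-bounds the global best expert's payoff, since one fixed expert need not win every block. Concretely, with your i.i.d.-per-time-step payoffs the global best expert earns only $T/2 + \Theta(\sqrt T)$, not $T/2 + \Theta(\sqrt{kT})$, regardless of how stale the sites are; so the whole construction collapses to an $\Omega(\sqrt T)$ bound. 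The ``$\sum_i \Omega(\sqrt{m_i}) = \Omega(\sqrt{kT})$'' arithmetic is correct but is bounding the wrong object.

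The paper fixes precisely this, and in doing so drops all of the adaptive-routing and staleness machinery. It uses a fixed round-robin routing: $T/k$ blocks of length $k$, each site queried exactly once per block. The adversary tosses \emph{one} coin at the start of each block and uses that same payoff vector for every step of the block---unless some communication occurs inside the block, in which case it re-tosses a fresh independent coin at each remaining step. Then (i) the algorithm's expected payoff is exactly $T/2$, since at every step the deciding site either has no information about the current block's coin (no communication yet) or faces a freshly randomized payoff; and (ii) the global difference $\sum_t(\vecp^t[1]-\vecp^t[2])$ is a sum of independent Rademacher terms, one of magnitude $k$ for each of the $T/k-|I|$ uncommunicated blocks (where $|I|$ counts blocks containing any communication) plus smaller terms for the rest. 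Khinchine's inequality applied once, globally, gives $\E[\max_a \sum_t \vecp^t[a]] \ge T/2 + \tfrac{1}{2\sqrt 2}\sqrt{(T/k-|I|)k^2}$, which forces $|I|=(T/k)(1-o(1))$ whenever the regret is $o(\sqrt{kT})$. The key idea you are missing is that the payoffs within a block must be \emph{perfectly correlated} (one coin), not i.i.d., so that each silent block contributes a $\pm k$ term to the global random walk; once you make that change, your bucketing, potential function, and stopping-time layers become unnecessary.
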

\vskip -6pt
The proof appears in Appendix \ref{app:adaptive}. Notice that in the coordinator prediction model,
when $C = T/k$, this lower bound is matched by the upper bound of $\LEF$.
%

In the case of an oblivious adversary, our results are weaker, but we can show
that certain natural types of algorithms are not applicable directly in this
setting. The so called \emph{regularized} leader algorithms, maintain a
cumulative payoff vector, $\vecP^t$, and use only this and a regularizer to
select an expert at time $t$. We consider two variants in the distributed
setting: \smallskip \\
(i) {\em Distributed Counter Algorithms}: Here the forecaster only
 uses $\tilde{\vecP}^t$, which is an (approximate) version of the
cumulative payoff vector $\vecP^t$. But we make no assumptions on how
the forecaster will use $\tilde{\vecP}^t$. $\tilde{\vecP}^t$ can be maintained while
using sub-linear communication by applying techniques from distributed systems
literature~\cite{HYZ12}. \smallskip \\
(ii) {\em Delayed Regularized Leader}: Here the regularized leaders don't try to
explicitly maintain an approximate version of the cumulative payoff vector.
Instead, they may use an arbitrary communication protocol, but make prediction
using the cumulative payoff vector (using \emph{any} past payoff vectors that
they could have received) and some regularizer.

We show in Section~\ref{sec:lower_bound} that the distributed counter approach
does not yield any non-trivial guarantee in the site-prediction model even
against an \emph{oblivious} adversary. It is possible to show a similar lower
bound the in the coordinator prediction model, but is omitted since it follows
easily from the idea in the site-prediction model combined with an explicit
communication lower bound given in \cite{HYZ12}.

Section~\ref{sec:cp} shows that the delayed regularized leader approach
does not yield non-trivial guarantees even against an \emph{oblivious
adversary} in the coordinator prediction model, suggesting $\LEF$ algorithm is
near optimal. \medskip \\

\noindent {\bf Related Work}: Recently there has been significant interest in
distributed online learning questions (see for example \cite{DGBSX11,DAW10,AD11}).
However, these works have focused mainly on stochastic optimization problems.
Thus, the techniques used, such as reducing variance through mini-batching, are
not applicable to our setting. Questions such as network structure~\cite{DAW10}
and network delays~\cite{AD11} are interesting in our setting as well, however,
at present our work focuses on establishing some non-trivial regret guarantees
in the distributed online non-stochastic experts setting.
Study of communication as a resource in distributed learning is also considered in
\cite{BBFM12,DPSV12a, DPSV12b}; however, this body of work seems only applicable
to offline learning.

The other related work is that of distributed functional monitoring \cite{CMY11}
and in particular distributed counting\cite{HYZ12,LRV12}, and sketching
~\cite{Cor11}. Some of these techniques have been successfully applied in
offline machine learning problems~\cite{CHW10}.  However, we are the first to
analyze the performance-communication trade-off of an online learning algorithm
in the standard distributed functional monitoring framework~\cite{CMY11}.  An
application of a distributed counter to an online Bayesian regression was
proposed in Liu et al.~\cite{LRV12}.  Our lower bounds discussed below, show
that approximate distributed counter techniques do not directly yield
non-trivial algorithms.

%


\section{Site-prediction model}
\label{sec:sp}
\subsection{Upper Bounds}
\label{sec:upper_bound}
We describe our algorithm that simultaneously achieves non-trivial bounds on
expected regret and expected communication.  We begin by making two assumptions
that simplify the exposition.  First, we assume that there are only $2$ experts.
The generalization from $2$ experts to $n$ is easy, as discussed in the
Remark~\ref{rem:n-experts} at the end of this section.  Second, we assume that
there exists a global query counter, that is available to all sites and the
co-ordinator, which keeps track of the total number of queries received across
the $k$ sites. We discuss this assumption in Remark~\ref{rem:sp-counter} at the
end of the section.  As is often the case in online algorithms, we assume that
the time horizon $T$ is known. Otherwise, the standard doubling trick may be
employed. The notation used in this Section is defined in
Table~\ref{table:notation}. 

\noindent{\bf Algorithm Description}: 
Our algorithm $\DFPL$ is described in Figure~\ref{alg:DFPL}(a). We
make use of $\ftpl$ algorithm, described in Figure~\ref{alg:DFPL}(b),
which takes as a parameter the amount of added noise $\eta$.  $\DFPL$
algorithm treats the $T$ time steps as $b (= T/\ell)$~blocks, each of
length $\ell$.  At a high level, with probability $q$ on any given
block the algorithm is in the {\em step} phase, running a copy of
$\ftpl$ (with noise parameter $\eta^\prime$) across all time steps of
the block, synchronizing after each time step.  Otherwise it is in
a {\em block} phase, running a copy of $\ftpl$ (with noise parameter
$\eta$) across blocks with the same expert being followed for the
entire block and synchronizing after each block.  This effectively
makes $\vecP^i$, the cumulative payoff over block $i$, the payoff
vector for the block $\FPL$. The block $\FPL$ has on average $(1-q)
T/\ell$ total time steps.  We begin by stating a (slightly stronger)
guarantee for $\ftpl$.

{\footnotesize
\begin{table}
\begin{center}
\begin{tabular}{cl}\hline
{\bf Symbol} & {\bf Definition} \\  \hline
$\vecp^t$ & Payoff vector at time-step $t$, $\vecp^t \in [0, 1]^2$\\
$\ell$    & The length of block into which inputs are divided \\
$b$	& Number of input blocks $b = T/\ell$ \\
$\vecP^i$ & Cumulative payoff vector within block $i$, $\vecP^i =
\sum_{t=(i-1)\ell + 1}^{i\ell} \vecp^t$ \\
$\vecQ^i$ & Cumulative payoff vector until end of block $(i-1)$, $\vecQ^i =
\sum_{j=1}^{i-1} \vecP^j$ \\
$M(v)$ & For vector $v \in \reals^2$, $M(v) = 1$ if $v_1 > v_2$; $M(v) = 2$
otherwise \\
$\FP^i(\eta)$ & Random variable denoting the payoff obtained by playing
$\FPL(\eta)$ on block $i$ \\
$\FR^i_a(\eta)$ & Random variable denoting the regret with respect to action $a$
of playing $\FPL(\eta)$ on block $i$ \\
 & $\FR^i_a(\eta) = \vecP^i[a] - \FP^i(\eta)$ \\ 
$\FR^i(\eta)$ & Random variable denoting the regret of playing $\FPL(\eta)$ on
payoff vectors in block $i$ \\
 & $\FR^i(\eta) = \max_{a =1, 2} \vecP^i[a] - \FP^i(\eta) = \max_{a =1, 2} \FR^i_a(\eta)$ \\ \hline
\end{tabular}
\caption{Notation used in Algorithm $\DFPL$ (Fig. \ref{alg:DFPL}) and in
Section \ref{sec:upper_bound}. \label{table:notation}}
\end{center}
\end{table}
}

%
\begin{figure}
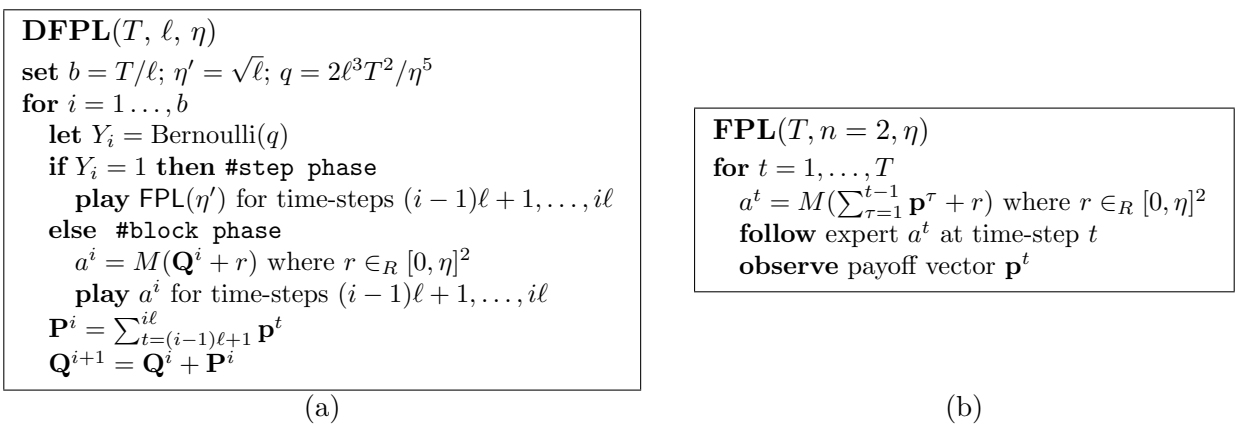

\centering{
$
\begin{array}{ccc}
\fbox{
{\small
\begin{minipage}{0.4 \textwidth}
\begin{tabbing}
aa\=aa\=aa\=aa\=aa\= \kill
{\normalsize {\bf DFPL}($T$, $\ell$, $\eta$)} \\[2pt]
{\bf set} $b = T/\ell$; $\eta^\prime = \sqrt{\ell}$; $q = 2 \ell^3 T^2/ \eta^5$ \\
{\bf for} $i = 1 \ldots, b$ \+ \\
{\bf let} $Y_i = \Bern(q)$ \\
{\bf if} $Y_i = 1$ {\bf then} {\tt\#step phase} \+ \\
{\bf play} $\FPL(\eta^\prime)$ for time-steps $(i-1)\ell + 1, \ldots, i\ell$ \- \\
{\bf else } {\tt\#block phase} \+ \\
$a^i = M(\vecQ^{i} + r)$ where $r \in_R [0, \eta]^2$ \\
{\bf play} $a^i$ for time-steps $(i-1)\ell + 1, \ldots, i\ell$ \- \\
$\vecP^{i} = \sum_{t=(i-1)\ell + 1}^{i\ell} \vecp^t$ \\
$\vecQ^{i+1} = \vecQ^{i} + \vecP^i$
\end{tabbing}
\end{minipage}
}
} & &
\fbox{
{\small
\begin{minipage}{0.4 \textwidth}
\begin{tabbing}
aa\=aa\=aa\=aa\=aa\= \kill
{\normalsize {\bf FPL}$(T,n=2,\eta)$} \\[2pt]
{\bf for} $t = 1, \ldots, T$ \+ \\
$a^t = M(\sum_{\tau=1}^{t-1} \vecp^{\tau} + r)$ where $r \in_R [0, \eta]^2$ \\ 
{\bf follow} expert $a^t$ at time-step $t$  \\
{\bf observe} payoff vector $\vecp^t$ \-
\end{tabbing}
\end{minipage}
}
}  \\
\mbox{(a)} & & \mbox{(b)}
\end{array}$
}

\caption{\label{alg:DFPL}{\small (a) $\DFPL$: Distributed Follow the Perturbed
Leader, (b) $\FPL$: Follow the Perturbed Leader with parameter $\eta$ for $2$
experts ($M(\cdot)$ is defined in Table \ref{table:notation}, $r$ is a random
vector)}}
\end{figure}

\begin{lemma} \label{lemma:ftpl-refined} Consider the case $n = 2$. Let $\vecp^1, \ldots, \vecp^T \in
[0,1]^2$ be a sequence of payoff vectors such that $\max_t |\vecp^t|_\infty \leq
B$ and let the number of experts be $2$. Then $\FPL(\eta)$ has the following
guarantee on expected regret, $\E[R] \leq \frac{B}{\eta} \sum_{t=1}^T
|\vecp^t[1] - \vecp^t[2]| + \eta$. \end{lemma}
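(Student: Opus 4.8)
The standard FPL analysis (Kalai–Vempala) gives a regret bound like $\eta + (\text{something})/\eta \cdot \sum_t \|\vecp^t\|_\infty$ via a "be-the-leader"/coupling argument. Here we want a refined bound where the $\ell_\infty$ norm in the summation is replaced by the gap $|\vecp^t[1] - \vecp^t[2]|$, which is the key improvement exploited in the block phase. The plan is to follow the usual FPL proof but track the gap more carefully for $n=2$.

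First I would set up the "hallucinated" / be-the-perturbed-leader comparison: define $a^t = M(\sum_{\tau<t}\vecp^\tau + r)$ as in the algorithm, and also consider the "cheating" forecaster $\tilde a^t = M(\sum_{\tau \le t}\vecp^\tau + r)$ that uses the current payoff too. A standard telescoping/induction argument shows $\sum_t \vecp^t[\tilde a^t] \ge \max_a \sum_t \vecp^t[a] + \min_a r_a - \max_a r_a \ge \max_a \sum_t \vecp^t[a] - \eta$, so the cheating forecaster has regret at most $\eta$. It then remains to bound $\sum_t (\vecp^t[\tilde a^t] - \vecp^t[a^t])$, the difference between the cheating and honest forecasters. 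Since $n=2$, on any round $t$ we have $\vecp^t[\tilde a^t] - \vecp^t[a^t] \le |\vecp^t[1]-\vecp^t[2]| \cdot \1[\tilde a^t \ne a^t]$, so it suffices to bound $\Pr[\tilde a^t \ne a^t]$ for each $t$.

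Next I would bound $\Pr[\tilde a^t \ne a^t]$. Let $S = (\sum_{\tau<t}\vecp^\tau)[1] - (\sum_{\tau<t}\vecp^\tau)[2]$ be the current gap in cumulative payoffs, and note $\tilde a^t$ is computed with $S$ replaced by $S + (\vecp^t[1]-\vecp^t[2])$, a shift of magnitude at most $B$ (using $\max_t \|\vecp^t\|_\infty \le B$, so $|\vecp^t[1]-\vecp^t[2]| \le B$... actually I should be a little careful: the bound we want has $B$ multiplying the gap sum, so the right estimate is $\Pr[\tilde a^t \ne a^t] \le |\vecp^t[1]-\vecp^t[2]|/\eta$ coming from $r_1 - r_2$ being "almost uniform" on a window of width $\ge \eta$). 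Concretely, $a^t = 1 \iff S + r_1 - r_2 > 0$ and $\tilde a^t = 1 \iff S + (\vecp^t[1]-\vecp^t[2]) + r_1 - r_2 > 0$; the two disagree only if $r_1 - r_2$ lies in an interval of length $|\vecp^t[1]-\vecp^t[2]|$, and since $r_1, r_2 \in_R [0,\eta]$ the density of $r_1 - r_2$ is at most $1/\eta$, giving $\Pr[\tilde a^t \ne a^t] \le |\vecp^t[1]-\vecp^t[2]|/\eta$. Summing over $t$, $\E\big[\sum_t (\vecp^t[\tilde a^t] - \vecp^t[a^t])\big] \le \sum_t |\vecp^t[1]-\vecp^t[2]| \cdot (|\vecp^t[1]-\vecp^t[2]|/\eta)$; bounding one of the two gap factors by $B$ yields $\frac{B}{\eta}\sum_t |\vecp^t[1]-\vecp^t[2]|$. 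Combining the two pieces gives $\E[R] \le \frac{B}{\eta}\sum_{t=1}^T |\vecp^t[1]-\vecp^t[2]| + \eta$.

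The main obstacle is getting the disagreement-probability step exactly right: one must be careful that the comparison forecaster $\tilde a^t$ and the true forecaster $a^t$ are coupled through the \emph{same} random vector $r$ (fixed once at the start, or equivalently re-drawn i.i.d.\ per round — both versions of FPL work, but the telescoping lemma for the cheating forecaster is cleanest with a single fixed $r$), and that the "density $\le 1/\eta$" bound for $r_1 - r_2$ is applied on the correct event. Everything else is routine. I would also remark that the $B$ here is needed because the per-round gap can be as large as $B$, and in the block-phase application $B = \ell$ while the gaps are controlled by the structure of the input.
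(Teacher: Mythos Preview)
Your argument is correct, but it takes a different route from the paper's proof. The paper does not redo the Kalai--Vempala analysis; instead it invokes the standard bound $\E[R] \le \frac{B}{\eta}\sum_t |\vecp^t|_1 + \eta$ as a black box and then applies a one-line reduction: since $\FPL(\eta)$ depends only on the difference of cumulative payoffs, one may replace each $\vecp^t$ by a shifted vector $\tilde{\vecp}^t$ with one coordinate zero and the other equal to $|\vecp^t[1]-\vecp^t[2]|$, leaving the regret distribution unchanged but making $|\tilde{\vecp}^t|_1 = |\vecp^t[1]-\vecp^t[2]|$. Your approach instead opens up the be-the-perturbed-leader argument and bounds the disagreement probability $\Pr[\tilde a^t\ne a^t]$ directly by $|\vecp^t[1]-\vecp^t[2]|/\eta$ using the density of $r_1-r_2$. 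This is a bit more work but is arguably more transparent about \emph{where} the gap enters, and it actually yields the sharper intermediate bound $\frac{1}{\eta}\sum_t |\vecp^t[1]-\vecp^t[2]|^2$ before you weaken one factor to $B$; the paper's shift trick does not expose this. The only place to be careful, as you note, is the coupling between fixed-perturbation and fresh-perturbation versions of $\FPL$; the standard observation that their per-round marginals coincide handles this, so there is no real gap in your argument.
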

The proof is a simple modification to the proof of the standard
analysis~{\cite{KV05}} and is given in Appendix \ref{app:ftpl-proof} for
completeness.  The rest of this section is devoted to the proof of Lemma~\ref{lem:DFPL}

\begin{lemma} \label{lem:DFPL} Consider the case $n=2$. If $T > 2 k^{2.3}$, Algorithm $\DFPL$ (Fig.
\ref{alg:DFPL}) when run with parameters $\ell$, $T$, $\eta = \ell^{5/12}T^{1/2}$
and $b, \eta^\prime, q$ as defined in Fig~\ref{alg:DFPL}, has expected regret
$O(\sqrt{\ell^{5/6}T})$ and expected communication $O(Tk/\ell)$. In particular
for $\ell = k^{1 + \epsilon}$ for $0 < \epsilon < 1/5$, the algorithm
simultaneously achieves regret that is asymptotically lower than $\sqrt{kT}$ and
communication that is asymptotically lower\footnote{Note that here asymptotics
is in terms of both parameters $k$ and $T$. Getting communication of the form
$T^{1-\delta} f(k)$ for regret bound better than $\sqrt{kT}$, seems to be a
fairly difficult and interesting problem} than $T$.
\end{lemma}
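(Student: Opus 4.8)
The plan is to bound the expected regret and the expected communication block by block, using the partition of the $b = T/\ell$ blocks into \emph{step} blocks ($Y_i = 1$) and \emph{block} blocks ($Y_i = 0$). Fix the best expert $a^\star = \argmax_a \sum_t \vecp^t[a]$ and write $R = \sum_{i=1}^b (\vecP^i[a^\star] - \mathrm{pay}_i)$, where $\mathrm{pay}_i$ is the payoff collected during block $i$, and split $R = R_{\mathrm{step}} + R_{\mathrm{block}}$ according to the type of block. On a step block the algorithm runs $\FPL(\eta')$ over the $\ell$ time steps of the block with payoffs in $[0,1]^2$, so by Lemma~\ref{lemma:ftpl-refined} (with $B = 1$) its expected regret against the best expert of that block is at most $\frac1{\eta'}\sum_{t \in \text{block }i} |\vecp^t[1] - \vecp^t[2]| + \eta' \le \ell/\eta' + \eta' = 2\sqrt\ell$, using $\eta' = \sqrt\ell$. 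Since each block is a step block independently with probability $q$, $\E[R_{\mathrm{step}}] \le 2qb\sqrt\ell = 2qT/\sqrt\ell$; substituting $q = 2\ell^3T^2/\eta^5$ and $\eta = \ell^{5/12}T^{1/2}$ (so $q = 2\ell^{11/12}T^{-1/2}$) yields $\E[R_{\mathrm{step}}] = O(\ell^{5/12}T^{1/2}) = O(\sqrt{\ell^{5/6}T}) = O(\eta)$.

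The block blocks are the crux. The choices $a^i = M(\vecQ^i + r)$ made on block blocks are exactly the choices of an instance of $\FPL(\eta)$ run over $b$ meta-rounds whose meta-payoff vectors are $\vecP^i \in [0,\ell]^2$. Two observations tame this. First, against an \emph{oblivious} adversary the payoff sequence, hence every $\vecQ^i = \sum_{j<i}\vecP^j$, is a fixed quantity, so the indicators $Y_i$ are independent of the perturbations used for the block-phase actions; this lets one pass from $\E[R_{\mathrm{block}}]$ to $(1-q)$ times the expected regret of the \emph{full} block-$\FPL$ (the copy that would act on all $b$ blocks). Second, Lemma~\ref{lemma:ftpl-refined} with $B = \ell$ bounds that regret by $(\ell/\eta)\sum_{i=1}^b |\vecP^i[1] - \vecP^i[2]| + \eta$. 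Plugging in the trivial estimate $\sum_i |\vecP^i[1] - \vecP^i[2]| \le \ell b = T$ gives only $O(\ell T/\eta + \eta) = O(\ell^{7/12}T^{1/2})$, which exceeds the target $O(\sqrt{\ell^{5/6}T})$ by a factor $\approx \ell^{1/6}$. Recovering that factor is the main obstacle I anticipate: the blocks with large internal difference $|\vecP^i[1] - \vecP^i[2]|$ are precisely the ones that blow up the block-$\FPL$ bound, and precisely those blocks carry, with probability $q$, a copy of $\FPL(\eta')$ whose cheap within-block regret $O(\sqrt\ell)$ can be charged against the expensive contribution. Making this precise amounts to balancing the block-$\FPL$ stability sum $(1/\eta)\sum_i |\vecP^i[1] - \vecP^i[2]|^2$ against $q$ times the cumulative block magnitudes, and it is exactly this balance that forces the choices $q = 2\ell^3T^2/\eta^5$ and $\eta = \ell^{5/12}T^{1/2}$; carried through, it gives $\E[R_{\mathrm{block}}] = O(\eta) = O(\sqrt{\ell^{5/6}T})$, so $\E[R] = O(\sqrt{\ell^{5/6}T})$.

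For communication, on a block block only the sites that actually receive a query during that block talk to the coordinator (to obtain $a^i$ and to report partial payoff sums); a block spans $\ell$ time steps, hence at most $\min(\ell,k) \le k$ distinct sites, so a block block costs $O(k)$ messages and all block blocks together $O(kb) = O(kT/\ell)$. A step block synchronizes after each of its $\ell$ time steps, costing $O(\ell)$ messages, so in expectation the step blocks cost $O(\ell \cdot qb) = O(qT)$ messages, which under the stated hypotheses ($T \ge 2k^{2.3}$, $\ell = k^{1+\epsilon}$, $\epsilon < 1/5$) is of lower order than $kT/\ell$; hence the total expected communication is $O(kT/\ell)$. (Under the same hypotheses $q \le 1$, so the Bernoulli draws are well defined.) Finally, setting $\ell = k^{1+\epsilon}$ with $0 < \epsilon < 1/5$ turns the regret bound into $O(\sqrt{k^{5(1+\epsilon)/6}T})$, which is $o(\sqrt{kT})$ exactly because $5(1+\epsilon)/6 < 1$, and the communication bound into $O(kT/k^{1+\epsilon}) = O(T/k^\epsilon) = o(T)$. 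Everything other than the block-block estimate is routine once the step/block split is in place.
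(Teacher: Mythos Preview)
Your decomposition and the communication bound are correct, and your bound $\E[R_{\mathrm{step}}]\le 2qb\sqrt\ell = O(\eta)$ is a valid upper bound. The gap is entirely in the block-phase analysis, and your proposed fix for the missing $\ell^{1/6}$ factor does not work.

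The mechanism you describe --- that on blocks with large $|\vecP^i[1]-\vecP^i[2]|$ the step phase's ``cheap within-block regret $O(\sqrt\ell)$ can be charged against the expensive contribution'' --- cannot close the gap: you have already spent that $2\sqrt\ell$ per step block in bounding $R_{\mathrm{step}}$, so there is nothing left over to subtract from $R_{\mathrm{block}}$. The quadratic stability sum $(1/\eta)\sum_i|\vecP^i[1]-\vecP^i[2]|^2$ you invoke does not come from Lemma~\ref{lemma:ftpl-refined}, and even if one proved such a refinement of $\FPL$ it would still only yield $\sum_i|\vecP^i[1]-\vecP^i[2]|^2\le \ell\sum_i|\vecP^i[1]-\vecP^i[2]|\le \ell T$ in the worst case, i.e.\ no improvement.

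What actually rescues the bound is that your estimate on $R_{\mathrm{step}}$ is too generous. Fixing expert~$1$ as the global best, on a step block $i$ where expert~$2$ is \emph{locally} better one has
\[
\E[\FR^i_1(\eta')]\;\le\; 2\sqrt\ell-(\vecP^i[2]-\vecP^i[1])_+,
\]
which is \emph{negative} when $\vecP^i[2]-\vecP^i[1]$ is large. Keeping this term instead of discarding it gives
\[
\E[R]\;\le\; 2qb\sqrt\ell \;-\; q\sum_{i}(\vecP^i[2]-\vecP^i[1])_+ \;+\;\frac{\ell}{\eta}\sum_i|\vecP^i[1]-\vecP^i[2]|\;+\;\eta,
\]
and the entire purpose of the step phase is to supply the \emph{negative} second term. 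One then needs a case analysis --- on the last block $\zeta$ with $\vecQ^\zeta[1]-\vecQ^\zeta[2]\le\eta$ (after which block-$\FPL$ never picks expert~$2$) and on the fraction $\alpha$ of blocks $i\le\zeta$ with $|\vecP^i[1]-\vecP^i[2]|$ above the threshold $\lambda=\eta^2/T$ --- to show that whenever the third term exceeds $O(\eta)$, the negative second term is at least as large in magnitude. It is this cancellation, not a charging of cheap regret against expensive regret, that forces the specific choice $q=2\ell^3T^2/\eta^5$; your sketch has the right parameter value but the wrong reason for it.
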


Since we are in the case of an
oblivious adversary, we may assume that the payoff vectors $\vecp^1, \ldots,
\vecp^T$ are fixed ahead of time. Without loss of generality let expert $1$ (out
of $\{1, 2\}$) be the one that has greater payoff in hindsight. 
Recall that $\FR^i_1(\eta^\prime)$ denotes the random variable that is the
regret of playing $\ftpl(\eta^\prime)$ {in a step phase} on block $i$ with
respect to the \emph{first} expert. In particular, this will be negative if
expert $2$ is the best expert on block $i$, even though globally expert $1$ is
better.  In fact, this is exactly what our algorithm exploits: {it gains on
regret in the communication-expensive, step phase while saving on communication
in the block phase.}

The regret can be written as 
\[ R = \sum_{i=1}^b \left( Y_i \cdot \FR^i_1(\eta^\prime) + (1 - Y_i)
(\vecP^i[1] - \vecP^i[a^i] \right). \]
Note that the random variables $Y_i$ are independent of the random variables
$\FR^i_1(\eta^\prime)$ and the random variables $a^i$. As $\E[Y_i] = q$, we can
bound the expression for expected regret as follows:
%
\begin{align}
\E[R] &\leq q \sum_{i=1}^b \E[\FR^i_1(\eta^\prime)] + (1 - q) \sum_{i=1}^b \E[\vecP^i[1] -
\vecP^i[a^i]] \label{eq:regret-main0}
\end{align}
We first analyze the second term of the above equation. This is just the regret
corresponding to running $\FPL(\eta)$ at the block level, with $T/\ell$ time
steps. Using the fact that $\max_i |\vecP^i|_{\infty} \leq \ell \max_t
|\vecp^t|_\infty \leq \ell$, Lemma \ref{lemma:ftpl-refined} allows us to
conclude that:
%
\begin{align}
\sum_{i=1}^b \E[\vecP^i[1] - \vecP^i[a^i]] &\leq \frac{\ell}{\eta}
\sum_{i=1}^b |\vecP^i[1] - \vecP^i[2]| + \eta \label{eq:regret-main2} 
\end{align}

Next, we also analyse the first term of the inequality (\ref{eq:regret-main0}).
We chose $\eta^\prime = \sqrt{\ell}$ (see Fig.~\ref{alg:DFPL}) and the analysis
of $\FPL$ guarantees that $\E[\FR^i(\eta^\prime)] \leq 2\sqrt{\ell}$, where
$\FR^i(\eta^\prime)$ denotes the random variable that is the \emph{actual}
regret of $\FPL(\eta^\prime)$, not the regret with respect to expert $1$ (which
is $\FR^i_1(\eta^\prime)$). Now either $\FR^i(\eta^\prime) =
\FR^i_1(\eta^\prime)$ (i.e. expert $1$ was the better one on block $i$), in
which case $\E[\FR^i_1(\eta^\prime)] \leq 2\sqrt{\ell}$; otherwise
$\FR^i(\eta^\prime) = \FR^i_2(\eta^\prime)$ (i.e.  expert $2$ was the better one
on block $i$), in which case $\E[\FR^i_1(\eta^\prime)] \leq  2\sqrt{\ell} +
\vecP^i[1] - \vecP^i[2]$. Note that in this expression $\vecP^i[1] - \vecP^i[2]$
is negative. Putting everything together we can write that
$\E[\FR^i_1(\eta^\prime)] \leq 2 \sqrt{\ell} - (\vecP^i[2] - \vecP^i[1])_+$,
where $(x)_+ = x$ if $x \geq 0$ and $0$ otherwise. Thus, we get the main
equation for regret.
\begin{align}
\E[R] &\leq 2 q b \sqrt{\ell} - \underbrace{q \sum_{i=1}^b (\vecP^i[2] -
\vecP^i[1])_+}_\text{term 1} +
\underbrace{\frac{\ell}{\eta} \sum_{i=1}^b |\vecP^i[1] -
\vecP^i[2]|}_\text{term 2} + \eta
\label{eq:regret-main}
\end{align}


Note that the first (i.e. $2qb\sqrt{\ell}$) and last (i.e. $\eta$) terms of
inequality (\ref{eq:regret-main}) are $O(\sqrt{\ell^{5/6}T})$ for the setting of
the parameters as in Lemma~\ref{lem:DFPL}. The strategy is to show that when
``term 2'' becomes large, then ``term 1'' is also large in magnitude, but
negative, compensating the effect of ``term 1''. We consider a few cases:
\smallskip \\
\noindent{\bf Case 1}: {\it When the best expert is identified quickly and not
changed thereafter.} 
%
Let $\zeta$ denote the maximum index, $i$, such that $\vecQ^i[1] - \vecQ^i[2]
\leq \eta$. Note that after the block $\zeta$ is processed, the algorithm in the
\emph{block} phase will never follow expert $2$.

Suppose that $\zeta \leq (\eta/\ell)^2$.  We note that the correct bound for
``term 2'' is now actually $(\ell/{\eta}) \sum_{i=1}^\zeta |\vecP^i[1] -
\vecP^i[2]|  \leq ({\ell^2 \zeta}/{\eta}) \leq \eta$
since $|\vecP^i[1] - \vecP^i[2]| \leq \ell$ for all $i$.
\smallskip \\
\noindent {\bf Case 2} {\it The best expert may not be identified quickly,
furthermore $|\vecP^i[1] - \vecP^i[2]|$ is large often.}
%
In this case, although ``term 2'' may be large (when $(\vecP^i[1]  -
\vecP^i[2])$ is large), this is compensated by the negative regret in ``term 1''
in expression (\ref{eq:regret-main}). This is because if $|\vecP^i[1] -
\vecP^i[2]|$ is large often, but the best expert is not identified quickly,
there must be enough blocks on which $(\vecP^i[2] - \vecP^i[1])$ is positive and
large.
%

Notice that $\zeta \geq (\eta/\ell)^2$.  Define $\lambda = \eta^2/T$ and let $S
= \{i \leq \zeta ~|~ |\vecP^i[1] - \vecP^i[2]| \geq \lambda \}$. Let $\alpha =
|S|/\zeta$. We show that $\sum_{i=1}^\zeta (\vecP^i[2] - \vecP^i[1])_+ \geq
(\alpha \zeta \lambda)/2 - \eta$.
To see this consider $S_1 = \{ i \in S ~|~ \vecP^i[1] > \vecP^i[2] \}$ and $S_2
= S \setminus S_1$. First, observe that $\sum_{i \in S} |\vecP^i[1] -
\vecP^i[2]| \geq \alpha \zeta \lambda$. Then, if $\sum_{i \in S_2} (\vecP^i[2] -
\vecP^i[1]) \geq (\alpha\zeta\lambda)/2$, we are done. If not $\sum_{i \in S_1}
(\vecP^i[1] - \vecP^i[2]) \geq (\alpha\zeta\lambda)/2$. Now notice that
$\sum_{i=1}^\zeta \vecP^i[1] - \vecP^i[2] \leq \eta$, hence it must be the case
that $\sum_{i =1}^\zeta (\vecP^i[2] - \vecP^i[1])_+ \geq  (\alpha
\zeta\lambda)/2 - \eta$.
Now for the value of $q = 2 \ell^3 T^2/\eta^5$ and if $\alpha \geq
\eta^2/(T\ell)$, the \emph{negative} contribution of ``term 1'' is at least $q
\alpha \zeta \lambda/2$ which greater than the maximum possible  positive
contribution of ``term 2" which is $\ell^2\zeta/\eta$. It is easy to see that
these quantities are equal and hence the total contribution of ``term 1" and
``term 2" together is at most $\eta$. \smallskip \\
\noindent{\bf Case 3} {\it When $|\vecP^i[1] - \vecP^i[2]|$ is ``small" most of
the time.} In this case the parameter $\eta$ is actually well-tuned (which was
not the case when $|\vecP^i[1] - \vecP^i[2]| \approx \ell$) and gives us a small
overall regret. (See Lemma~\ref{lemma:ftpl-refined}.)
We have $\alpha < \eta^2/(T\ell)$.  Note that $\alpha \ell \leq \lambda =
\eta^2/T$ and that $\zeta \leq T /\ell$. In this case ``term 2" can be bounded
easily as follows: $ \frac{\ell}{\eta} \sum_{i=1}^\zeta |\vecP^i[1] -
\vecP^i[2]| \leq \frac{\ell}{\eta} ( \alpha \zeta \ell + (1 - \alpha) \zeta
\lambda) \leq 2 \eta $  \smallskip \\
The above three cases exhaust all possibilities and hence no matter what the
nature of the payoff sequence, the expected regret of $\DFPL$ is bounded by
$O(\eta)$ as required. The expected total communication is easily seen to be
$O(q T + Tk/\ell)$ -- the $q(T/\ell)$ blocks on which \emph{step} $\ftpl$ is
used contribute $O(\ell)$ communication each, and the $(1-q) (T/\ell)$ blocks
where \emph{block} $\ftpl$ is used contributed $O(k)$ communication each.
%
%
\begin{remark} \label{rem:n-experts}
Our algorithm can be generalized to $n$ experts by recursively dividing the set
of experts in two and applying our algorithm to two meta-experts, as shown in
Section~\ref{sec:sp-ub-n-experts} in the Appendix. However, the bound obtained
in Section~\ref{sec:sp-ub-n-experts} is not optimal in terms of the number of
experts, $n$. This observation and Lemma~\ref{lem:DFPL} imply
Theorem~\ref{thm:main}.
\end{remark}

\begin{remark} \label{rem:sp-counter}
The assumption that there is a global counter is necessary because our algorithm
divides the input into blocks of size $\ell$. However, it is not an impediment
because it is sufficient that the block sizes are in the range $[0.99 \ell, 1.01
\ell]$. Assuming that the coordinator always signals the beginning and end of
the block (by a broadcast which only adds $2k$ messages to any block), we can
use a distributed counter that guarantees a very tight approximation to the
number of queries received in each block with at most $O(k \log(\ell))$ messages
communicated (see ~\cite{HYZ12}).
\end{remark}
%
%


\subsection{Lower Bounds}
\label{sec:lower_bound}
%
In this section we give a lower bound on distributed counter algorithms in the
site prediction model. Distributed counters allow tight approximation
guarantees, i.e. for factor $\beta$ additive approximation, the communication
required is only $O(T\log(T)\sqrt{k}/\beta)$~\cite{HYZ12}. We observe that the
noise used by $\ftpl$ is quite large, $O(\sqrt{T})$, and so it is tempting to
find a suitable $\beta$ and run $\ftpl$ using approximate cumulative payoffs. We
consider the class of algorithms such that: \smallskip \\
\mbox{~}(i) Whenever each site receives a query, it has an (approximate) cumulative payoff
of each expert to additive accuracy $\beta$. Furthermore, any communication is
only used to maintain such a counter. \\
\mbox{~}(ii) Any site only uses the (approximate) cumulative payoffs and any local
information it may have to choose an expert when queried. \smallskip \\
However, our negative result shows that even with a highly accurate counter
$\beta = O(k)$, the non-stochasticity of the payoff sequence 
may cause any such algorithm to have $\Omega(\sqrt{kT})$
regret. Furthermore, we show that any distributed algorithm that implements
(approximate) counters to additive error $k/10$ on all sites\footnote{The approximation guarantee is
only required when a site receives a query and has to make a prediction.} is at least $\Omega(T)$. 

\begin{theorem} \label{thm:sp_lb_dc} At any time step $t$, suppose each site has
an (approximate) cumulative payoff count, $\tilde{\vecP}^t[a]$, for every expert
such that $|\vecP^t[a] - \tilde{\vecP}^t[a]| \leq \beta$. Then we have the
following: \smallskip \\
1. If $\beta \leq k$, any algorithm that uses the approximate counts
$\tilde{\vecP}^t[a]$ and \emph{any} local information at the site
making the decision, cannot achieve expected regret asymptotically better than
$\sqrt{\beta T}$. \\
%
2. Any protocol on the distributed system that guarantees that at each time
step, each site has a $\beta = k/10$ approximate cumulative payoff with
probability $\geq 1/2$, uses $\Omega(T)$ communication.\\
%
\end{theorem}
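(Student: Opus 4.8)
We take $n=2$ throughout, as in all our lower bounds, and prove the two parts separately; both are Yao-style arguments, so it suffices to exhibit a distribution over inputs on which every \emph{deterministic} strategy (respectively protocol) fails.

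\smallskip
\noindent\emph{Part 1.} Cut $[T]$ into $B=T/\beta$ blocks of length $\beta$. On block $j$ draw a fresh uniform $\sigma_j\in\{1,2\}$ and let every payoff vector in the block give payoff $1$ to expert $\sigma_j$ and $0$ to the other. Because $\beta\le k$, route the $\beta$ queries of a block to $\beta$ \emph{distinct} sites, so no site ever observes two payoff vectors from the same block; hence, when a site predicts during block $j$, its only block-$j$ information is the reported count $\tilde\vecP$. Since $\tilde\vecP$ is only promised to be $\beta$-accurate, we are free to let it be the following lazy (but legal) count: at the $m$-th step of block $j$ it reports the \emph{exact} cumulative payoff through the end of block $j-1$ plus $\tfrac m2(1,1)$ --- this is within $\tfrac\beta2\le\beta$ of the truth and is a function of $\sigma_1,\dots,\sigma_{j-1}$ alone. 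Consequently the expert played at any step of block $j$ is independent of $\sigma_j$, so the algorithm earns $\beta/2$ in expectation on every block and has total expected payoff $T/2$. The best fixed expert earns $\beta\max(n_1,n_2)$ with $n_a=|\{j:\sigma_j=a\}|$ and $n_1+n_2=B$, i.e.\ $\tfrac T2+\tfrac\beta2|n_1-n_2|$, whose expectation is $\tfrac T2+\Theta(\beta\sqrt B)=\tfrac T2+\Theta(\sqrt{\beta T})$. Subtracting gives $\E[R]=\Omega(\sqrt{\beta T})$, and Yao's principle transfers this to randomized strategies against a worst-case oblivious input.

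\smallskip
\noindent\emph{Part 2.} Fix $\beta=k/10$ and a run length $w=\lceil k/4\rceil$, so $w>2\beta$. Partition $[T]$ into runs of length $w$; on run $\rho$ draw a fresh uniform bit $c_\rho$ and let every payoff vector in the run equal $(c_\rho,\,1-c_\rho)$. Route \emph{all} queries of run $\rho$ (hence, in the site-prediction model, all of its payoff observations) to the single site $\rho\bmod k$, so each site directly observes only a $1/k$ fraction of the runs. The key anti-concentration estimate: fix a site $j$, a step $t$ in some run $\rho$ such that $j$ observed none of runs $\rho-3,\rho-2,\rho-1$ (true for all but $O(1)$ residues of $\rho$) and received no message in those runs; then $j$'s held value is independent of $(c_{\rho-3},c_{\rho-2},c_{\rho-1})$, which shift $\vecP^t[1]$ by $w(c_{\rho-3}+c_{\rho-2}+c_{\rho-1})\in\{0,w,2w,3w\}$. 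These four equally likely values are pairwise more than $2\beta$ apart, so any fixed number is a valid $\beta$-approximation of at most one of them; hence $j$ is accurate with probability at most $3/8<1/2$, contradicting the promised guarantee for the pair $(j,t)$. Therefore, at all but $O(k)$ steps, each of the $k-1$ sites that is not the current observer must have received a message within the last $O(k)$ steps. A standard averaging argument finishes it: each coordinator$\to$site message makes one site ``covered'' for $O(k)$ steps, so it accounts for only $O(k)$ covered (site, step) pairs, while the guarantee forces a constant fraction of the $\Theta(kT)$ relevant pairs to be covered with constant probability; hence the expected number of messages is $\Omega(T)$, which also handles randomized protocols via the same averaging over the $c_\rho$'s.

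\smallskip
\noindent\emph{Where the difficulty lies.} In Part 1 the regret bookkeeping is routine; the real content is producing a \emph{legal} sequence of approximate counts that provably reveals nothing about the current block's favoured expert while simultaneously ruling out every other channel --- in particular the direct observations, which is exactly the role of the hypothesis $\beta\le k$ (so that a block's queries fit on distinct sites), and any information carried forward across block boundaries. An \emph{exact} counter defeats this construction, so the $\beta$ slack is essential and must be spent carefully. In Part 2 the crux is the quantitative claim that a \emph{constant} number of unrefreshed runs already pushes the error past $k/10$ with probability exceeding $1/2$: this is what converts ``a site must be refreshed'' into ``every $\Theta(k)$ steps'', hence into an amortized $\Omega(1)$ message cost per time step. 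The supporting points where care is needed are (i) arranging the routing so that a site cannot reconstruct the count from its own observations (clustering a whole run onto one site, rather than round-robin, which would let a site track the count exactly with no communication), and (ii) making the probability-$\tfrac12$ promise survive the averaging over the adversary's random bits and the protocol's coins.
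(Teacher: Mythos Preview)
For Part~1 your argument is essentially the paper's: both chop $[T]$ into $T/\beta$ blocks, flip a fresh coin per block, route the $\beta$ queries of a block to $\beta$ distinct sites (this is where $\beta\le k$ enters), and exhibit a legal $\beta$-approximate counter that is measurable with respect to the previous blocks' coins only. You add the symmetric drift $\tfrac m2(1,1)$; the paper simply freezes the exact count at the previous block boundary. Both choices work, and the regret computation is identical.

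Part~2 is a genuinely different construction. The paper uses a \emph{deterministic} payoff stream (all ones, so the cumulative count is literally $t$) with \emph{random routing}: within each block of length $k$ the sites are visited in a fresh uniform permutation, so a site that has not communicated since its previous query cannot know its position in the current block and therefore misses a uniformly random target in an interval of length $k$ with probability at least $4/5$. You instead randomize the \emph{payoffs} (one Bernoulli bit per run of length $w\approx k/4$) and route deterministically ($\rho\bmod k$); a pleasant by-product is that your argument does not hinge on the absence of a global clock. Two small repairs are needed in your anti-concentration step. First, the four values $wS$ with $S=c_{\rho-3}+c_{\rho-2}+c_{\rho-1}$ are \emph{not} equally likely (they occur with probabilities $1/8,3/8,3/8,1/8$); the $3/8$ you quote is nonetheless the correct maximum, so the conclusion survives. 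Second, the event ``$j$ received no message in those runs'' may itself depend on $(c_{\rho-3},c_{\rho-2},c_{\rho-1})$, and a message to $j$ during run $\rho$ prior to time $t$ could also leak those bits, so the independence claim as stated is not justified. The standard fix is to take $E$ to be ``no message to $j$ from the start of run $\rho-3$ through time $t$'' and bound $\Pr[A\cap E]\le 3/8$ directly (on $E$ the held value is measurable with respect to $c_1,\dots,c_{\rho-4}$ and earlier protocol coins), whence $\Pr[E^c]\ge 1/8$; your covering/averaging then yields $\Omega(T)$ expected messages as claimed.
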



\bsection{Coordinator-prediction model}
\label{sec:cp}
In the co-ordinator prediction model, as mentioned earlier it is possible to use
the label-efficient forecaster,~$\LEF$ (Chap. 6 \cite{CBL06,CBLS05}). Let $C$ be
an upper bound on the total amount of communication we are allowed to use. The
label-efficient predictor translates into the following simple protocol:
Whenever a site receives a payoff vector, it will forward that particular payoff
to the coordinator with probability $p \approx C/T$. The coordinator will always
execute the exponentially weighted forecaster over the sampled subset of payoffs
to make new decisions. Here, the expected regret is $O(T\sqrt{\log(n)/ C})$. In
other words, if our regret needs to be $O(\sqrt T)$, the communication needs to
be linear in $T$.

We observe that in principle there is a possibility of better algorithms in this
setting for mainly two reasons: (i) when the sites send payoff vectors to the
co-ordinator, they can send cumulative payoffs rather than the latest ones, thus
giving more information, and (ii) the sites may decided when to communicate as a
function of the payoff vectors instead of just randomly. However, we
present a lower-bound that shows that for a natural family of algorithms achieving
regret $O(\sqrt{T})$ requires at least $\Omega(T^{1-\epsilon})$ for every
$\epsilon > 0$, {even when $k = 1$}. 
%
The type of algorithms we consider may have an arbitrary communication protocol,
but it satisfies the following: (i) Whenever a site communicates with the
coordinator, the site will report its local cumulative payoff vector. (ii) When
the coordinator makes a decision, it will execute, $\FPL(\sqrt{T})$, (follow the
perturbed leader with noise $\sqrt{T}$) using the latest cumulative payoff
vector. The proof of Theorem \ref{thm:cp_lb_obliv_k} appears in Appendix
\ref{app:cp_lb} and the results could be generalized to other regularizers.
%
%
%
\begin{theorem} \label{thm:cp_lb_obliv_k}
Consider the distributed non-stochastic expert problem in coordinator prediction
model. Any algorithm of the kind described above that achieves regret
$O(\sqrt{T})$ must use $\Omega(T^{1-\epsilon})$ communication against an
oblivious adversary for every constant $\epsilon$.
\end{theorem}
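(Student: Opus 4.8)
The plan is to specialize to the hardest relevant instance, rewrite the expected regret of any algorithm of the prescribed form as a clean functional of the cumulative payoff difference, and then build an oblivious input on which keeping that functional small provably requires $\Omega(T^{1-\epsilon})$ synchronizations.

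\textbf{Reduction and a regret formula.} It suffices to take $k=1$ and $n=2$ (the theorem already isolates $k=1$, and two experts only makes the task of the algorithm easier to analyze and harder to solve). Relabel so that expert $1$ is best in hindsight on the sequence we will construct, and set $\delta_t=\vecp^t[1]-\vecp^t[2]\in[-1,1]$ and $D_t=\sum_{s\le t}\delta_s$. By the assumed form of the algorithm, whenever the coordinator predicts it runs $\FPL(\sqrt T)$ on the last cumulative vector it has received; since every message carries a genuine prefix sum, the coordinator's cumulative difference at round $t$ is $\hat D_t=D_{\tau(t)}$, where $\tau(t)$ is the index of the last payoff it has seen. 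Hence on the entire window between two consecutive messages the coordinator plays expert $1$ with one \emph{fixed} probability $p(\hat D_t)$, where $p(x)=\Pr[\,U_1-U_2>-x\,]$ for independent $U_1,U_2\sim\mathrm{Unif}[0,\sqrt T]$. Writing $p=\tfrac12+g$ (so $g$ is odd, increasing, $g(0)=0$, $g'(0)=1/\sqrt T$, and $g\equiv\pm\tfrac12$ outside $[-\sqrt T,\sqrt T]$), a one-line computation gives
\[
  \E[R]\;=\;\tfrac12 D_T\;-\;\sum_{t=1}^{T}\delta_t\,g(\hat D_t).
\]
With $\tau(t)=t-1$ this is exactly non-distributed $\FPL(\sqrt T)$, whose value is $O(\sqrt T)$ in absolute value by Lemma~\ref{lemma:ftpl-refined}. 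Subtracting, it is enough to force the \emph{excess over perfect tracking}
\[
  \mathrm{Exc}\;=\;\sum_{t=1}^{T}\delta_t\bigl(g(D_{t-1})-g(\hat D_t)\bigr)
\]
to be $\omega(\sqrt T)$ whenever the number $m$ of messages is $o(T^{1-\epsilon})$.

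\textbf{From excess to a per-window drift sum.} I will keep $D_t$ inside a band where $g$ is essentially linear with slope $\asymp 1/\sqrt T$. Grouping $\mathrm{Exc}$ over the $m$ inter-message windows $w_1,\dots,w_m$ (so $\sum_j|w_j|=T$), with $s_j$ the left endpoint of $w_j$ and $\Delta_j=D_{s_{j+1}}-D_{s_j}$ the net change of $D$ over $w_j$, and using on each window the Abel-summation identity
\[
  \sum_{t\in w_j}\delta_t\bigl(D_{t-1}-D_{s_j}\bigr)\;=\;\tfrac12\Bigl(\Delta_j^2-\textstyle\sum_{t\in w_j}\delta_t^2\Bigr),
\]
one obtains, for payoffs in $\{0,1\}$ (so $\sum_t\delta_t^2=T$),
\[
  \mathrm{Exc}\;\gtrsim\;\frac{1}{\sqrt T}\Bigl(\sum_{j=1}^m \Delta_j^2 \;-\; O(T)\Bigr),
\]
the error term being controlled precisely because the construction keeps $D$ in the near-linear range of $g$. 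Thus it suffices to exhibit a randomized (to later invoke Yao's principle) oblivious input such that every choice of $m=o(T^{1-\epsilon})$ breakpoints forces $\sum_j\Delta_j^2=\omega(T)$.

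\textbf{The hard input and the main obstacle.} The input will drive $D$ along $\Theta(T^{1-\epsilon})$ monotone ``ramps'', each of length $\Theta(T^{\epsilon})$, with directions chosen so that the induced walk on the ramp endpoints stays in a band of width $\asymp\sqrt T$ around $0$; this is feasible precisely in the interesting range $\epsilon<1/2$, and it also keeps $D$ in the near-linear region of $g$. Any window that swallows a whole ramp contributes $\Delta_j^2\asymp T^{2\epsilon}$, so if a constant fraction of the $\Theta(T^{1-\epsilon})$ ramps are straddled by such a window then $\sum_j\Delta_j^2\gtrsim T^{1-\epsilon}\cdot T^{2\epsilon}=T^{1+\epsilon}=\omega(T)$, as required. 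The delicate point — which I expect to be the crux of the whole argument — is that one cheap strategy threatens to escape this: synchronizing \emph{only at times when $D$ returns to a fixed level} makes every $\Delta_j=0$ (and a confined path does revisit some level $\Omega(\sqrt T)$ times). For an online site this scheme requires committing to a level and to its revisit-times before the path is revealed, so the way to rule it out is to randomize the ramp directions and lengths enough that with high probability no level is revisited ``on schedule'' in a way a frugal online site could have anticipated; making this quantitative — e.g.\ via a martingale/anti-concentration bound showing that between any two messages the coordinator's stale estimate must lag a random ramp walk by $\Omega(T^{\epsilon/2})$ on average — is the technical heart of the proof. Two further steps are routine: treating the contribution of the saturated tail of $p$ (handled by the confinement of the ramp-endpoint walk), and invoking Yao's minimax principle to pass from deterministic algorithms on this family of sequences to randomized algorithms against a single oblivious adversary. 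Since the argument uses only that $p$ is Lipschitz, bi-Lipschitz near the origin, and saturates, it carries over to other regularizers.
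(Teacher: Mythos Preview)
Your reduction to the excess functional and the Abel-summation identity are correct, but the conclusion you draw from them is where the argument breaks. Your own formula shows that (in the linear regime of $g$)
\[
  \mathrm{Exc}\;=\;\frac{1}{2\sqrt T}\Bigl(\sum_{j}\Delta_j^2-\sum_t\delta_t^2\Bigr),
\]
so lower bounding the regret via $\mathrm{Exc}$ is exactly lower bounding $\sum_j\Delta_j^2$. You then write that it suffices to design an input family on which \emph{every} choice of $m=o(T^{1-\epsilon})$ breakpoints forces $\sum_j\Delta_j^2=\omega(T)$. This is impossible. At $k=1$ the site observes every payoff and therefore knows $D_t$ exactly at all times; it can simply communicate whenever $D_t$ returns to $0$ (or to any level it chooses on the fly). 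On any path confined to a band of width $O(\sqrt T)$ --- which is exactly what your construction enforces --- the level $0$ is crossed $\Omega(\sqrt T)$ times; picking breakpoints at (a subset of) these crossings gives all $\Delta_j=0$ using only $O(\sqrt T)=o(T^{1-\epsilon})$ messages for every $\epsilon<1/2$. Your statement that this strategy ``requires committing to a level and to its revisit-times before the path is revealed'' is a misreading of the model: the site reacts online, so no randomization of ramp directions or lengths can hide zero-crossings from it. Concretely, against your randomized ramp construction the site that communicates at every zero-crossing keeps the coordinator at $\hat D\equiv 0$, the coordinator plays $50$--$50$, and the regret is $|D_T|/2=O(\sqrt T)$ with $o(T^{1-\epsilon})$ messages --- a direct counterexample to the claimed sufficiency.

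The paper sidesteps this entirely by not working with a single (even randomized) sequence. It builds a \emph{family} $\vecp_{(0)},\vecp_{(1)},\dots$ where $\vecp_{(0)}$ is a zigzag and each $\vecp_{(i)}$ coincides with $\vecp_{(0)}$ up to time $(2i-1)\lambda$ and is monotone thereafter. The point is that on $\vecp_{(i)}$ there are no further zero-crossings after the branch, so any ``communicate at returns'' scheme goes silent and incurs $\Omega(\lambda)$ regret. The core lemma (Claim~A) shows that if the algorithm has $O(\sqrt T)$ regret on \emph{every} $\vecp_{(i)}$, then on $\vecp_{(0)}$ it must communicate inside almost every $2\lambda$-window; a calculus estimate (Claim~B) then converts sparse communication inside a window into a per-window payoff deficit of order $\sqrt T/C(i)$, and summing gives the $\Omega(T^{1-\epsilon})$ bound. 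The branching family --- not a quadratic-variation lower bound on a single path --- is the missing idea.
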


%
%
%


\bsection{Simulations}
\label{sec:simulations}

\begin{figure*}[ht!]
\vskip -6pt
\centering{
$
\begin{array}{ccccc}
\includegraphics[scale=0.41]{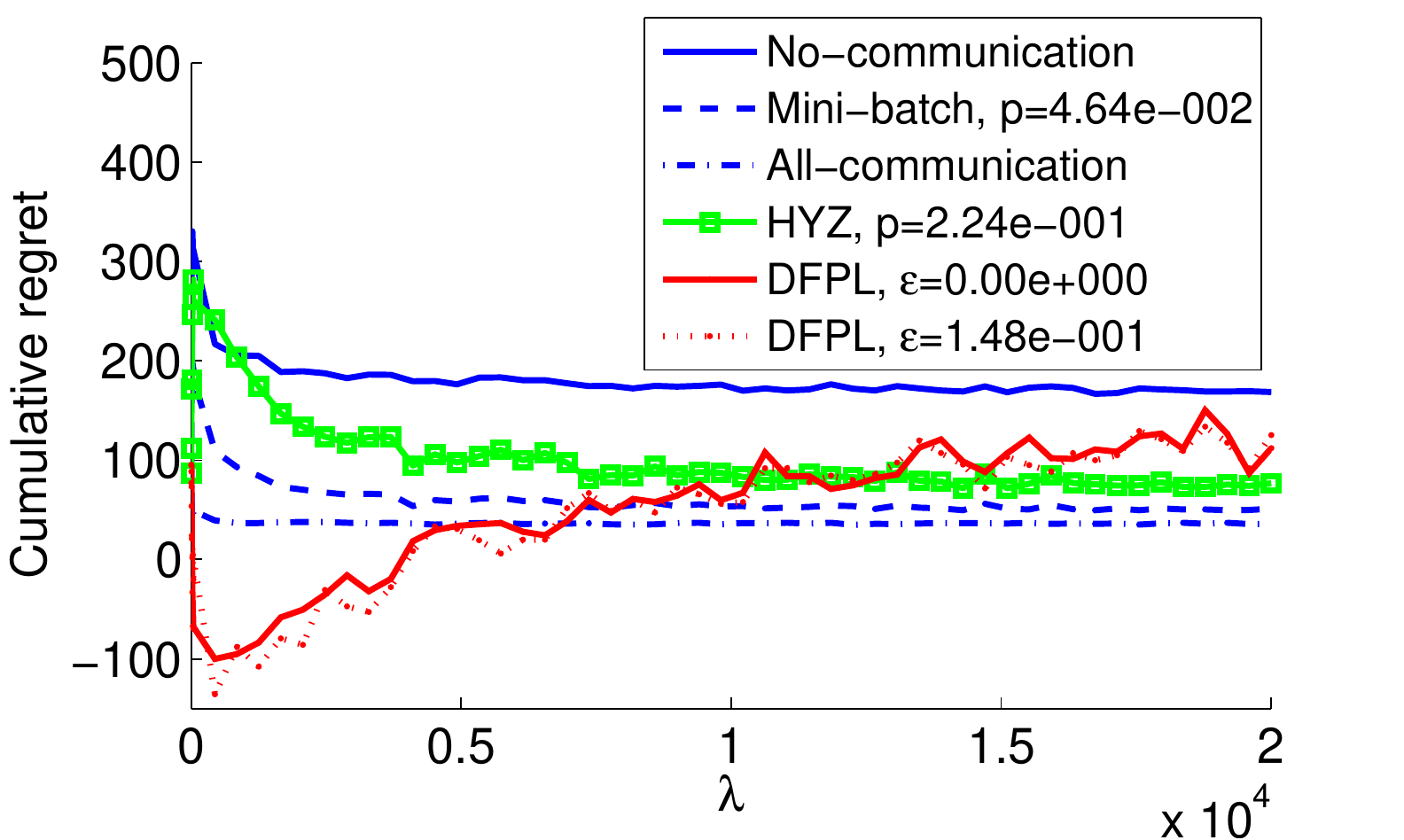} &\mbox{ }&
\includegraphics[scale=0.41]{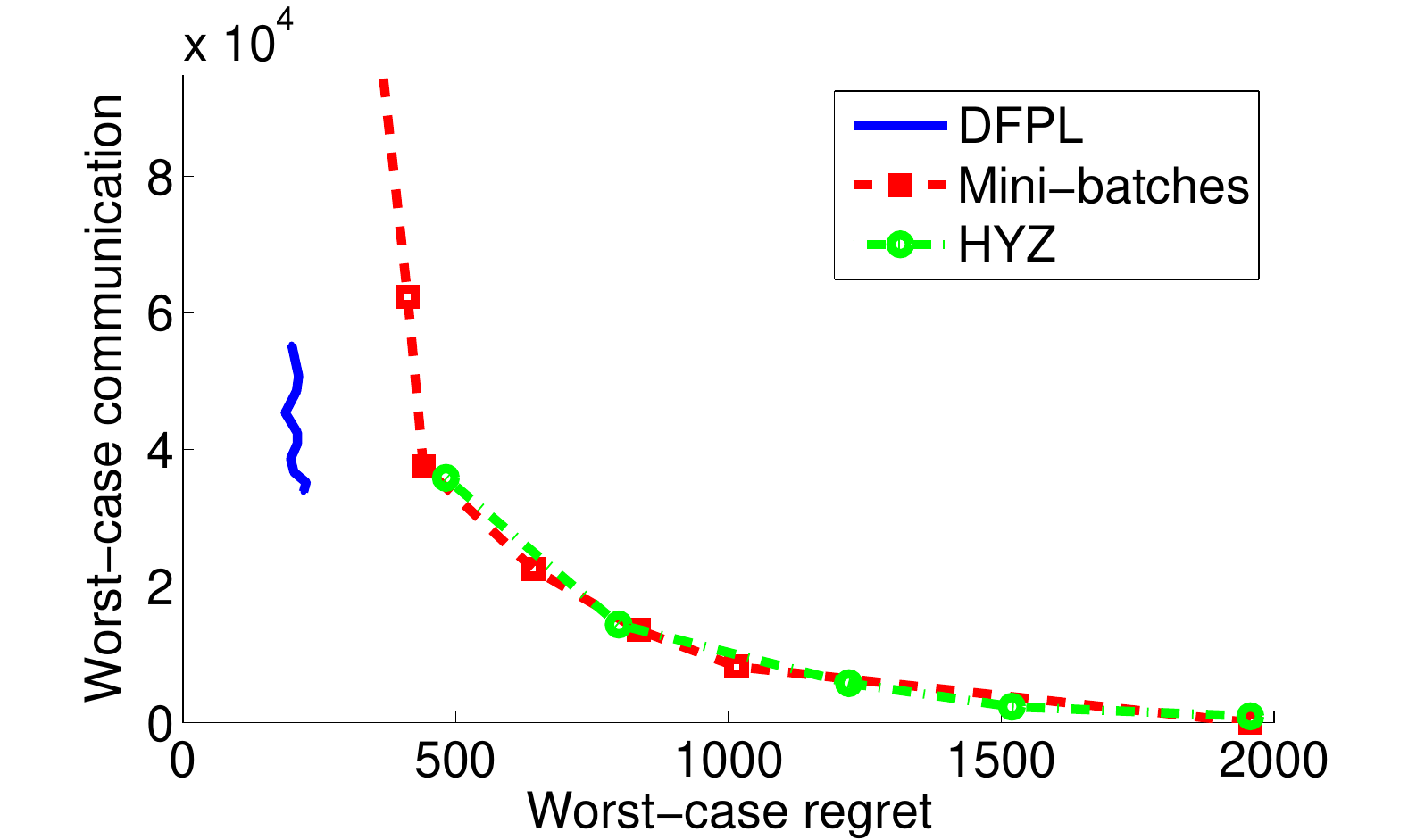} \\
\mbox{(a)} &\mbox{ }& \mbox{(b)}
\end{array}$}
\vskip -6pt
\caption{\small (a) - Cumulative regret for the MC sequences as a function of correlation $\lambda$, (b) - Worst-case cumulative regret vs. communication cost for the MC and zig-zag sequences.}
\label{fig:simulations}
\vskip -6pt
\end{figure*}

In this section, we describe some simulation results comparing the
efficacy of our algorithm $\DFPL$ with some other techniques. We compare $\DFPL$ against
simple algorithms -- \emph{full communication} and \emph{no
communication}, and two other algorithms which we refer to as \textsf{mini-batch}
and $\HYZ$. In the mini-batch algorithm, the coordinator requests randomly, with some
probability $p$ at any time step, all cumulative payoff vectors at all
sites. It then broadcasts the sum (across all of the sites) back to the sites,
so that all sites have the latest cumulative payoff vector. Whenever such a
communication does occur, the cost is $2k$. We refer to this as
\textsf{mini-batch} because it is similar in spirit to the mini-batch algorithms
used in the stochastic optimization problems. In the $\HYZ$ algorithm, we use
the distributed counter technique of Huang et al. \cite{HYZ12} to maintain the
(approximate) cumulative payoff for each expert. Whenever a counter update occurs, the
coordinator must broadcast to all nodes to make sure they have the most current
update.

We consider two types of synthetic sequences. The first is a zig-zag sequence, with $\mu$ being the
length of one increase/decrease. For the first $\mu$ time steps the payoff
vector is always $(1, 0)$ (expert 1 being better), then for the next $2 \mu$
time steps, the payoff vector is $(0, 1)$ (expert 2 is better), and then again
for the next $2\mu$ time-steps, payoff vector is $(1,0)$ and so on.
The zig-zag sequence is also the sequence used in the proof of the lower bound in Theorem~\ref{thm:cp_lb_obliv_k}.
The second is a two-state Markov chain (MC) with states $1, 2$ and $\Pr[1 \to 2] =
\Pr[2 \to 1] = \frac{1}{2\lambda}$. While in state $1$, the payoff vector is
$(1,0)$ and when in state $2$ it is $(0,1)$.

In our simulations we use $T=20000$ predictions, and $k=20$ sites. Fig.
\ref{fig:simulations} (a) shows the performance of the above algorithms for the
MC sequences, the results are averaged across $100$ runs, over both the
randomness of the MC and the algorithms. Fig. \ref{fig:simulations} (b) shows
the worst-case cumulative communication vs the worst-case cumulative regret trade-off for three algorithms: $\DFPL$,
\textsf{mini-batch} and $\HYZ$, over all the described sequences. While in general it is hard to compare
algorithms on non-stochastic inputs, our results confirm that for non-stochastic
sequences inspired by the lower-bounds in the paper, our algorithm $\DFPL$
outperforms other related techniques.

\eat{
First, for each given $\lambda$ we run no-communication,
all-communication and mini-batch/label-efficient protocols (with different
communication probabilities $p$). We also run DFPL with different block sizes $l
= k ^{1+\epsilon}$, with $0 \leq \epsilon \leq 1/3$. To obtain the average
regret we average over 100 different realization of MC traffic. We plot the
average regret as a function of $x$ in Figure~\ref{fig:simulations}(a).

\br{Define regularization noise here, as it is different for $p=0$ and $p>0$.}

Here we see that the regret of mini-batches is the worst for a highly
negatively-correlated traffic. For a positively correlated traffic mini-batches
perform very well. DFPL performs exactly the opposite. However, as we can see,
DFPL has better worst-case performance.

The worst-case regret is defined as the worst (maximum) average regret over all
$x$. Similarly, the worst-case communication is defined as the worst (maximum)
communication cost over all $x$. In Figure~\ref{fig:simulations}(b) we plot the
worst-case regret vs. the worst-case cost for DFPL and mini-batches for MC
traffic.

Similarly, we repeat the same experiments for the zig-zag traffic, and for each
$x$ we average over 50 runs.  In Figure~\ref{fig:simulations}(c) we plot the
worst-case regret vs. the worst-case cost for DFPL and mini-batches for zig-zag
traffic.

We see that DFPL has substantially better worst-case performance. It is better
in both terms of communication cost and regret.

\br{Show how the performance depends on $T$ and $k$.}

}

%
%
%

\bibliographystyle{unsrt}
\bibliography{ref}

\newpage

\appendix
\bsection{Adaptive Adversary}
\label{app:adaptive}
This section contains a proof of Theorem~\ref{thm:adaptive}. The proof makes use
of Khinchine's inequality (see Appendix A.1.14 in \cite{CBL06}).

\begin{theoremK} Let $\sigma_1, \ldots, \sigma_n$ be Rademacher random
variables, i.e.  $\Pr[\sigma_i = 1] = \Pr[\sigma_i = -1] = 1/2$. Then for any
real numbers $a_1, \ldots, a_n$, 
\[ \E\left[| \sum_{i=1}^n a_i \sigma_i|\right] \geq \frac{1}{\sqrt{2}} \sqrt{\sum_{i=1}^n
a_i^2}  = \frac{1}{\sqrt{2}} \sqrt{\E\left[\left(\sum_{i=1}^n a_i
\sigma_i\right)^2\right]} \]
\end{theoremK}


\begin{proof}[Proof of Theorem~\ref{thm:adaptive}]
The adaptive adversary divides the total $T$ time steps into $T/k$ time blocks,
each consisting of $k$ time-steps. During each block of $k$ time-steps, each
of the $k$ sites receives exactly $1$ query.
At time $t = 1, k+1, 2k+1, \ldots$, the
adversary tosses an unbiased coin. Let $\vecp_H$ denote the payoff vector
corresponding to \emph{heads}, where $\vecp_H[1] = 1$ and $\vecp_H[2]= 0$.
Similarly let $\vecp_T$ (corresponding to \emph{tails}) be such that $\vecp_T[1]
= 0$ and $\vecp_T[2] = 1$. For $i = 1, \ldots, T/k$ and $j = 1, \ldots, k$, the
\emph{adaptive adversary} does the following: At time $(i-1)k + j$, if there was
no communication on part of the decision maker (distributed system) between time
steps $(i-1)k + 1, \ldots, (i-1)k + j-1$ -- then if the coin toss at time
$(i-1)k + 1$ was heads the payoff vector is $\vecp_H$, otherwise it is
$\vecp_T$. On the other hand if there was any communication, then the adaptive
adversary tosses a random coin and sets the payoff vector accordingly. 

Consider the expected payoff of the algorithm: At time $t = (i-1) k + j$, if
there was communication between time steps $(i-1)k + 1$ to $(i-1)k + j-1$, then
the adversary has chosen the payoff vector uniformly at random between $\vecp_H$
and $\vecp_T$ and hence the expected reward at time step $t$ is exactly $1/2$.
On the other hand if there was no communication between these time steps, then
the site $j$ making the decision has no information about the coin toss of the
adversary at time $(i-1)j + 1$, and hence the expected reward is still $1/2$.
Thus, the total expected reward of the algorithm (by linearity of expectation)
is $T/2$. 

Note that,
\begin{align}
\E\left[\max_{i= 1, 2} \sum_{t=1}^T \vecp^t[i]\right] &= \frac{1}{2}
\left(\E\left[\sum_{t=1}^T \vecpt[1] + \vecpt[2]\right] + \E\left[ |\sum_{t=1}^T
(\vecp^t[1] - \vecp^t[2])|\right] \right) \nonumber \\
& = \frac{T}{2} + \frac{1}{2} \E\left[|\sum_{t=1}^T (\vecp^t[1] -
\vecp^t[2])|\right]
\label{eqn:beforeKhinchine} \\
%
\intertext{
Let $I \subseteq [T/k]$ be the indices of the blocks for which there was some
communication. Consider blocks in $I$ and those outside of $I$. Suppose the
block $(i-1)k + 1, \ldots, ik$ is such that $i \not\in I$, then
$|\sum_{t=(i-1)k+1}^{t=ik} \vecpt[1] - \vecpt[2]| = k$. Note that all such block
sums (as random variables) are independent of all other block sums. For some
block $(i-1)k + 1, \ldots, ik$ such that $i \in I$, let $c(i)$ be such the first
such that communication occurs at block $(i-1)k + c(i)$. Then
$|\sum_{t=(i-1)k+1}^{t=(i-1)k+c(i)} \vecpt[1] - \vecpt[2]| = c(i)$, also note
that $\vecpt$ for $t = (i-1)k + c(i) +1, \ldots, ik$ are all based on
independent coin tosses. Then note that, }
\sum_{t=1}^T \vecpt[1] - \vecpt[2] &= \sum_{i \not\in I} k \sigma_i,1 + \sum_{i \in
I} (c(i) \sigma_i,1 + \sum_{j=c(i)+1}^k \sigma_{i, j}),
\intertext{where $\sigma_i,j$ are the Rademacher variables corresponding to the
coin tosses of the adversary at time step $(i-1)k + j$. Also note that,}
\E\left[\left(\sum_{t=1}^T \vecpt[1] - \vecpt[2]\right)^2 \right] &\geq \left(
\frac{T}{k} - |I|\right)k^2 \nonumber
\intertext{Then, Khinchine's inequality and (\ref{eqn:beforeKhinchine}) gives us
that}
\E[\max_{i = 1, 2} \sum_{t=1}^T \vecp^t[i]] &\geq \frac{T}{2} +
\frac{1}{2\sqrt{2}} \sqrt{\E\left[\left(\sum_{t=1}^T \vecpt[1] -
\vecpt[2]\right)^2 \right]} \nonumber \\
&\geq \frac{T}{2} + \frac{1}{2 \sqrt{2}} \sqrt{\left(\frac{T}{k} - |I|\right)
k^2} \nonumber
\end{align}
Now, unless $|I| = (T/k)(1 - o(1))$, it must be the case that $\E[\max_{i=1,2}
\sum_{t=1}^T \vecpt[i]] \geq T/2 + \Omega(\sqrt{kT})$ leading to total expected
regret $\Omega(\sqrt{kT})$. Hence, any algorithm that achieves regret
$o(\sqrt{kT})$ must have communication $(1 - o(1)) T/k$.
\end{proof}

\bsection{Follow the Perturbed Leader}
\label{app:ftpl-proof}
\begin{proof}[Proof of Lemma~\ref{lemma:ftpl-refined}]
We first note that using the given notation, the regret guarantee of
$\FPL(\eta)$ (see Fig. \ref{alg:DFPL}(b)) is
\begin{align*}
\E[R] &\leq \frac{B}{\eta}\sum_{t=1}^T |\vecpt|_1 + \eta
\end{align*}
The above appears in the analysis of Kalai and Vempala ~\cite{KV05}. Note that
although $|\vecpt|_1 = \vecpt[1] + \vecpt[2]$ ($\vecpt[a] \geq 0$ in our
setting), we can use the following trick. We first observe that since
$\FPL(\eta)$ only depends on the difference between the cumulative payoffs of
the two experts, we may replace the payoff vectors $\vecpt$ by
$\tilde{\vecp}^t$, where \smallskip \\
(i) if $\vecpt[1] \geq \vecpt[2]$, $\tilde{\vecp}^t[1] = \vecpt[1] - \vecpt[2]$
and $\tilde{\vecp}^t[2] = 0$
(ii) if $\vecpt[1] < \vecpt[2]$, $\tilde{\vecp}^t[1] = 0$
and $\tilde{\vecp}^t[2] = \vecpt[2] - \vecpt[1]$

Next, we observe that the regret of $\FPL(\eta)$ with payoff sequence $\vecpt$
and $\tilde{\vecp}^t$ is identically distributed, since the random choices only
depend on the difference between the cumulative payoffs at any time. Lastly, we
note that $|\tilde{\vecp}^t|_1 = |\vecpt[1] - \vecpt[2]|$, which completes the
proof.
\end{proof}

\bsection{Site Prediction : Missing Proofs}
\subsection{Generalizing $\DFPL$ to $n$ experts}
\label{sec:sp-ub-n-experts}

In this section, we generalize our $\DFPL$ algorithm for two experts to handle
$n$ experts. Lemma~\ref{lem:DFPL} showed that algorithm $\DFPL$, in the
setting of two experts, guarantees that the expected regret is at most $c_0
\sqrt{\ell^{5/6} T}$, where $c_0$ is a universal constant.

Our generalization follows a recursive approach. Suppose that some algorithm
$\alg$ can achieve expected regret, $c_0 \log(n) \sqrt{\ell^{5/6} T}$ with $n$
experts, we show that we can construct algorithm $\alg'$ that achieves expected
regret, $c_0 (\log(n) + 1)$ with $2n$ experts as follows: We run 2 independent
copies of $\alg$ (say $\alg_1$ and $\alg_2$) such that $\alg_1$ only deals with
the first $n$ experts $a_1, a_2, ..., a_n$ and $\alg_2$ with the rest of the
experts $a_{n + 1}, ..., a_{2n}$. Then our algorithm $\alg'$ treats $\alg_1$ and
$\alg_2$ as 2 experts and runs the $\DFPL$ algorithm (Section
\ref{sec:upper_bound}) over these two experts. The analysis for regret is
straightforward: 
%

Let the regret for $\alg_1$ be $R_1$ and the regret for $\alg_2$ be $R_2$. We
have $$\E[\payoff(\alg_1)] \geq \max_{i \in [n]}\sum_{t \leq T}\vecp^t[i] -
\E[R_1] \quad \mbox{ and } \quad \E[\payoff(\alg_2)] \geq \max_{i \in \{n + 1,
..., 2n\}}\sum_{t \leq T}\vecp^t[i] - \E[R_2].$$ We know that $\E[R_1] \leq c_0
\log(n) \sqrt{\ell^{5/6} T}$ and $\E[R_2] \leq c_0 \log(n) \sqrt{\ell^{5/6} T}$.

Next, we can see that
\begin{align*}
\E[\payoff(\alg') \mid \payoff(\alg_1), \payoff(\alg_2)] &\geq
\max\{\payoff(\alg_1), \payoff(\alg_2)\} - c_0 \sqrt{\ell^{5/6}T} \\
\intertext{We can use the above expression to conclude (taking expectations)
that}
\E[\payoff(\alg')] &\geq \E[\payoff(\alg_1)] - c_0\sqrt{\ell^{5/6} T}  \\
\E[\payoff(\alg')] &\geq \E[\payoff(\alg_2)] - c_0\sqrt{\ell^{5/6} T}
\intertext{But using the above two inequalities we can conclude that}
\E[\payoff(\alg')] &\leq \max_{i \in [2n]} \sum_{t \leq T} \vecp^t[i] -
c_0(\log(n) + 1) \sqrt{l^{5/6} T}
%
%
\end{align*}

This immediately implies that for $n$ experts (starting from base case of $n =
2$ where $\DFPL$ works), this recursive approach results in an algorithm for $n$
experts achieves regret $O(\log(n)\sqrt{\ell^{5/6}T})$. In order to analyze the
communication, we observe that in order to implement the algorithm correctly,
when algorithm (which is $\DFPL$ at some depth in the recursion) decides to
communicate at each time step on a block, the communication on that block is
$\ell$. There are at most $n$ copies of $\DFPL$ running (depth of the recursion
is $\log(n) - 1$).  However, the corresponding term in the communication bound
$O( n q T \ell)$ is lower than the term arising from blocks where communication
occurs only at the beginning and end of block, $O((1-qn) T k/\ell)$. Thus, the
expected communication (in terms of number of messages) is \emph{asymptotically}
the same as in the case of $2$ experts. If we count communication complexity as
the cost of sending $1$ real number, instead of one message, then the total
communication cost is $O(nTk/\ell)$. 

%
%
%
%
%

\subsection{Lower Bounds}
\subsubsection{No Communication Protocol}
\label{app:sp-lb-zero}

In the site-prediction setting, we show that any algorithm that uses no
communication must achieve regret $\Omega(\sqrt{kT})$ on some sequence. The
proof is quite simple, but does not follow directly from the $\Omega(\sqrt{T})$
lower-bound of the non-distributed case, because although the $k$ sites each run
a copy of some $\FPL$-like algorithm, the best expert might be different across
the sites. We only consider the case when $n = 2$, since we are more interested
in dependence on $T$ and $k$. 
%
%
%

\begin{lemma}\label{lemma:sp-no-comm} If no-communication protocol is used in
the site-prediction model expected regret achieved by any algorithm is at least
$\Omega(\sqrt{kT})$. 
\end{lemma}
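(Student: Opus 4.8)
The plan is to construct an adversarial sequence on which each site, running its own copy of some $\FPL$-like algorithm with no information about the other sites, is forced to suffer $\Omega(\sqrt{T_i})$ regret, where $T_i$ is the number of queries seen by site $i$, and then sum these up. The key observation is that with no communication we have genuinely $k$ independent instances, and the adversary can choose the ``direction'' of the payoff difference independently on each site; by a Khinchine-type argument the benchmark $\max_{a} \sum_t \vecp^t[a]$ will exceed the algorithm's expected payoff by $\Omega(\sqrt{T_i})$ on each site, and $\sum_i \sqrt{T_i}$ is minimized (subject to $\sum_i T_i = T$) when all $T_i = T/k$, giving $\sum_i \sqrt{T/k} = \sqrt{kT}$.

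Concretely, first I would fix the query schedule so that each site receives exactly $T/k$ queries (this is legitimate since the adversary controls $s^1,\dots,s^T$). Then, mimicking the construction in the proof of Theorem~\ref{thm:adaptive}, I would have the adversary, \emph{independently for each site $i$}, pick a uniformly random bit $\sigma_i$ and set every payoff vector on that site's queries to $\vecp_H = (1,0)$ if $\sigma_i = 1$ and $\vecp_T = (0,1)$ if $\sigma_i = -1$. (Actually one can afford to re-randomize per query within a site, but constant-per-site is cleanest and still forces the lower bound.) Since there is no communication, when site $i$ is queried it has seen only payoff vectors from its own past queries, all of which agree in direction with $\sigma_i$ — so after the first query the site ``knows'' $\sigma_i$ and will do well on site $i$'s internal regret, but that is exactly the point: the internal benchmark and the algorithm's payoff on site $i$ differ only on the first step, whereas the \emph{global} benchmark $\max_{a\in\{1,2\}}\sum_{t=1}^T \vecp^t[a]$ is larger. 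Write $\sum_{t=1}^T \vecp^t[1] - \vecp^t[2] = \frac{T}{k}\sum_{i=1}^k \sigma_i$; then as in~(\ref{eqn:beforeKhinchine}), $\E[\max_a \sum_t \vecp^t[a]] = \frac{T}{2} + \frac12 \E\big[\big|\frac{T}{k}\sum_{i=1}^k \sigma_i\big|\big] \geq \frac{T}{2} + \frac{1}{2\sqrt2}\frac{T}{k}\sqrt{k} = \frac{T}{2} + \Omega(\sqrt{kT})$ by Khinchine's inequality. On the other hand, the algorithm's expected total payoff is at most $\frac{T}{2} + O(k)$: on each site, for each of its $T/k$ queries, the expected payoff is $1/2$ unless the site has already seen a payoff vector on that site (in which case it can gain at most $1$ per query but only after conditioning on $\sigma_i$ is revealed — and in fact the only ``free'' information is $\sigma_i$ itself, so the gain over $T/2$ is at most the number of sites, $k$, corresponding to one fully-informative step per site). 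Subtracting, the expected regret is $\Omega(\sqrt{kT}) - O(k) = \Omega(\sqrt{kT})$ once $T \gg k$.

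The main obstacle I anticipate is making the ``the algorithm gains at most $O(k)$ over $T/2$'' step fully rigorous: unlike in Theorem~\ref{thm:adaptive} where communication re-randomizes the adversary's coin, here after the first query on a site the payoff direction $\sigma_i$ is deterministic for the rest of that site's queries, so in principle the algorithm could collect payoff $1$ on all $T/k - 1$ remaining queries of site $i$. That would make the algorithm's payoff up to $T/k \cdot k \cdot 1 = T$, not $T/2 + O(k)$ — so this naive per-site-constant construction does \emph{not} directly work, and I would instead re-randomize the payoff direction on \emph{every} query (independently across all $T$ queries, or at least freshly for each query given the site), exactly as the oblivious version of the Theorem~\ref{thm:adaptive} adversary but with the key point that a no-communication site, when queried, has literally no information about the fresh coin for that query (its own past queries carry no signal about the current one). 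Then the algorithm's expected payoff is exactly $T/2$ by linearity, while the benchmark computation proceeds via Khinchine on the $T$ i.i.d. Rademacher steps: $\E[\max_a \sum_t \vecp^t[a]] = T/2 + \frac12 \E[|\sum_{t=1}^T \sigma_t|] \geq T/2 + \frac{1}{2\sqrt2}\sqrt{T}$. This, however, only gives $\Omega(\sqrt{T})$, not $\Omega(\sqrt{kT})$ — so the correlation structure \emph{is} essential, and the right construction is the intermediate one: re-randomize per query \emph{but} correlate within a block the way a no-communication algorithm cannot exploit. The cleanest fix is: partition each site's $T/k$ queries into consecutive pairs; within a pair the second query's coin equals the first's \emph{only if} we are willing to lose the independence needed for Khinchine — so actually the correct and simplest approach is to keep the payoff direction \emph{constant per site} but charge the algorithm correctly: on site $i$, the algorithm's expected payoff is $\frac{T}{2k} + \frac{T}{2k}$ only if it always picks the best expert from query $2$ onward, i.e. at most $\frac{T}{k} - \frac12$ in expectation (losing $1/2$ on the first query since $\sigma_i$ is then unknown), while the site-$i$ contribution to the global benchmark is $\frac{T}{2k} + \frac{T}{2k}\cdot\mathbf{1}[\sigma_i = \text{global best direction}]$ — summing and applying Khinchine to $\sum_i \sigma_i$ recovers $T/2 + \Omega(\sqrt{kT})$ for the benchmark against $T/2 - k/2$ for the algorithm, as needed. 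I would present this last version, with the per-site-constant adversary and the careful accounting that the algorithm's advantage over $T/2$ is at most $O(k)$, and flag the Khinchine inequality (already quoted in Appendix~\ref{app:adaptive}) as the one external tool.
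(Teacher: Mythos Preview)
Your final construction (per-site-constant coins $\sigma_i$) does not work, and the slip is in the last line: the algorithm's expected payoff is $T - k/2$, not $T/2 - k/2$. You yourself computed that site $i$ can achieve expected payoff $T/k - 1/2$ (guess on the first query, then follow the revealed $\sigma_i$ for the remaining $T/k - 1$ queries), and summing over $k$ sites gives $T - k/2$. Meanwhile the benchmark under this construction is $T/2 + \tfrac{1}{2}\E\bigl[\bigl|\tfrac{T}{k}\sum_i\sigma_i\bigr|\bigr] = T/2 + \Theta(T/\sqrt{k})$, not $T/2 + \Theta(\sqrt{kT})$ as you wrote. So the expected regret is at most $T/2 + \Theta(T/\sqrt{k}) - (T - k/2) = \Theta(T/\sqrt{k}) - T/2 + k/2$, which is \emph{negative} for any fixed $k \geq 5$ and large $T$. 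In short, correlating along the time axis within a single site lets the algorithm learn and defeats the lower bound.

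The paper's construction correlates in the orthogonal direction: divide the $T$ time steps into $T/k$ blocks of length $k$, toss one coin \emph{per block}, and route the $k$ queries within each block to the $k$ distinct sites (one each). Now when site $j$ is queried in block $i$, its only history is the payoff vectors from blocks $1,\dots,i-1$, which carry no information about block $i$'s fresh coin; hence every query has expected payoff exactly $1/2$ and the algorithm gets $T/2$ in total. For the benchmark, $\sum_t(\vecp^t[1]-\vecp^t[2]) = k\sum_{j=1}^{T/k}\sigma_j$, so Khinchine gives $\E[\max_a\sum_t\vecp^t[a]] \geq T/2 + \tfrac{k}{2\sqrt{2}}\sqrt{T/k} = T/2 + \Omega(\sqrt{kT})$, and the lemma follows. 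The ``intermediate'' correlation structure you were looking for is across \emph{sites within a time block}, not across \emph{time within a site}.
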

\begin{proof}
The oblivious adversary does the following: Divide $T$ time steps into $T/k$ blocks of
size $k$. For each block, toss a random coin and set the payoff vector to be
$\vecp_H = (1, 0)$ for heads or $\vecp_T = (0, 1)$ for tails. And each query in
a block is assigned to one site (say in a cyclic fashion). Note that the
expected reward of any algorithm that does not use any communication is $T/2$.
Because, no site at any time can perform better than random guessing. But the
standard analysis shows that for the sequence as constructed above
$\E[\max_{a=1, 2} \sum_{t=1}^T \vecpt[a]] \geq T/2 + \Omega(k \sqrt{T/k}) = T/2
+ \Omega(\sqrt{kT})$.
\end{proof}

\subsubsection{Lower Bound using Distributed Counter}

This section contains proof of Theorem~\ref{thm:sp_lb_dc}.

\begin{proof}[Proof of Theorem~\ref{thm:sp_lb_dc}]  \mbox{ }\\

\noindent{\bf Part 1}: The \emph{oblivious} adversary decides to only use
$\beta$ out of the $k$ sites. The adversary divides the input sequence into
$T/\beta$ blocks, each block of size $\beta$. For each block, the adversary
tosses an unbiased coin and sets the payoff vector $\vecp_H = (1, 0)$ or
$\vecp_T = (0, 1)$ according to whether the coin toss resulted in heads or
tails. Let $\tilde{\vecP}^t[a] = \vecP^{t^*}[a]$, where $t^*$ is largest such that
$t^* < t$ and $t^* = \beta i$ for some integer $i$ (i.e. $t^*$ is the time at
the end of the block). Note that $|\tilde{\vecP}^t[a] - \vecP^t[a]| \leq \beta$,
so $\tilde{\vecP}^t[a]$ is a valid (approximate) value of the cumulative payoff of
action $a$. However, since the payoff vectors across the blocks are completely
uncorrelated and each site makes a decision only once in each block, the
expected reward at any time step $t$ is $1/2$, and overall expected reward is
$T/2$.

Note, that it is easy to show that $\E[\max_{i =1, 2} \sum_{t=1}^T  \vecp^t[i]]
\geq T/2 + \Omega(\sqrt{\beta T})$ using standard techniques. Thus the expected
regret is at least $\Omega(\sqrt{\beta T})$.  \medskip \\

\noindent {\bf Part 2}: Let $\beta = k/10$. Now consider the input sequence that
is all $1$. But that this is divided into $T/k$ blocks of size $k$. For each
block, the \emph{oblivious} adversary chooses a random permutation of $\{1,
\ldots, k\}$ and allocates the $1$ to the site in that order. Note that when the
site receives a $1$, it is required to have an $\beta$-approximate value to the
current count.  Suppose there was no communication since this site last received
a query, then at that time the estimate at this site was at most $ik + \beta$.
Now, depending on where in the permutation the site is it may be required to
have a value in any of the intervals $[ik - \beta, ik + \beta], [ik, ik+2\beta],
[ik+\beta, ik+3\beta], \ldots, [(i+1)k -\beta, (i+1)k+ 2\beta]$. There are at
least $5$ disjoint intervals in this state and each of them are equally
probable. Thus with probability at least $4/5$, in the absence of any
communication, this site fails to have the correct approximate estimate.

If on the other hand, every site does communicate at least once every time it
receives a query. The total communication is at least $T$.
\end{proof}

\bsection{Proof of Theorem~\ref{thm:cp_lb_obliv_k}}
\label{app:cp_lb}
\begin{proof}[Proof of Theorem~\ref{thm:cp_lb_obliv_k}]
To prove Theorem~\ref{thm:cp_lb_obliv_k}, we construct a set of reward sequences
$\vecp^t_0, \vecp^t_1, ..., $, and show that any FPL-like algorithm (as
described in Section~\ref{sec:cp}), will have regret $\Omega(\sqrt{T})$ on least
one of these sequences unless the communication is essentially linear in $T$.

Before we start the actual analysis, we need to introduce some more notation.
First, recall that $C$ is an upper bound on the amount of communication allowed
in the protocol. We shall focus reward sequences where at any time-step exactly
one of the experts receives payoff $1$ and the other expert receives payoff $0$,
i.e. $\vecp^t \in \{(0, 1), (1, 0)\}$ for any $t$.  Let $g^{\vecp}(t) =
\vecp^t[1] - \vecp^t[2]$,  and let $G^{\vecp}(t) = \sum_{i = 1}^t g^{\vecp}(t)$.
Thus, we note that the payoff vectors $\vecp$, the function $g^{\vecp}$, and the
function $G^{\vecp}$ all encode equivalent information regarding payoffs as a
function of time.

Suppose, $\alg$ is an algorithm that achieves optimal regret under the
communication bound $C$. Let $r$ denote the random coin tosses used by, $\alg$.
Thus we may think of $r$ as being a string of length $\mathrm{poly}(n, k) T$
fixed ahead of time.  Let $\vecp^1$, ..., $\vecp^T$ be a specific input
sequence. Let $T_1, T_2, \ldots, T_C$ denote the time-steps when communication
occurs. We note that $T_i$ may depend on $r_i$ which is a prefix of the (random)
string $r$, which the algorithm observes until time-step $T_i$ and may also
depend on the payoff vectors $\vecp^1, \ldots, \vecp^{T_i}$. 


Next, we describe the set of reward sequences to ``fool'' the algorithm. Let
$\lambda$ be a parameter that will be fixed later. We construct up to $(T/(2
\lambda)) + 1$ possible payoff sequences. We denote this payoff sequences as
$\vecp_{(0)}, \vecp_{(1)}, \ldots, \vecp_{(T/(2\lambda)) + 1}$. These sequences
are constructed as follows:
\begin{itemize}
\item $\vecp_{(0)}$: Let $g^+$ denote a sequence of $\lambda$ consecutive $1$'s
and $g^-$ denote a sequence of $\lambda$ consecutive $-1$'s. Then the sequence
$\langle g^{\vecp_{(0)}}(t) \rangle_{t \leq T}$ is defined to be the sequence
$g^-, g^+, g^+, g^-, g^-, ...$, i.e. $g^{\vecp_{(0)}}(t) = -1$ if $\lceil (t -
1)/\lambda \rceil$ is even and $g^{\vecp_{(0)}}(t) = 1$ if $\lceil (t -
1)/\lambda \rceil$ is odd. Furthermore, we assume that $T = (4 m_1 + 3)\lambda$
for some integer $m_1$. This means that $G^{\vecp_{(0)}}(T) = \lambda$, i.e.
eventually expert $1$ will be the better expert.
\item $\vecp_{(i)}$ for $i > 0$ and $i$ even: In this payoff sequence, the
payoff vectors for the first $(2i - 1) \lambda$ time-steps will be identical to
those in $\vecp_{0}$. For the rest of the time-steps the payoff vector will
always be $\{(1, 0)\}$, i.e. the first expert always receives a unit payoff for
$t > (2i-1)\lambda$. Thus, for sequences of this form, where $i$ is even, expert
$1$ will be the better expert.
\item $\vecp_{(i)}$ for $i > 0$ and $i$ odd: In this payoff sequence, the payoff
vectors for the first $(2i - 1)\lambda$ time-steps will be identical to
$\vecp_{(0)}$. For the rest of the time-steps, the payoff vector will always be
$\{(0, 1)\}$, i.e. the second expert always receives a unit payoff after $t >
(2i - 1)\lambda$. Thus, for sequences of this form, where $i$ is odd, expert $2$
will be the better expert.
\end{itemize}
Furthermore, in what follows, we assume that there is only one site node. (This
is not a problem, since worst adversary could send all the payoff vectors to
just one of the site nodes.) We shall refer to the $i$-th cycle of the input in
the above sequences as the input between time steps  $(4i+2)\lambda -
(\sqrt{T}/2) + 1$ and $(4i + 4)\lambda + (\sqrt{T}/2)$. Let $F^i$ be an
indicator random variable (depending on the randomness $r$ of the algorithm),
such that $F^i = 0$, if there is \emph{some} communication between the time
steps $2i \lambda + \sqrt{T}/2$ and $(2i+2)\lambda  - \sqrt{t}/2$. If there is
no communication, we will set $F^i = 1$. 

Now, we prove the main result using a series of claims. First, we show add a few
extra communication points, showing that this only increases the payoff of the
algorithm (hence decreases regret). Let ${\mathcal I} = \{ i ~|~ F^{2i} =
F^{2i+1} = F^{2i+2} = 0 \}$. Note that ${\mathcal I}$ itself is a random
variable. For every $i \in {\mathcal I}$, we allow extra communication to the
algorithm (for free) at the end of the following time-steps: $(4i+2) \lambda -
\sqrt{T}/2$ $(4i+2) \lambda + \sqrt{T}/2$, $(4i+4)\lambda - \sqrt{T}/2$, and
$(4i+4)\sqrt{T}/2$. Note, that this extra communication can only increase the
payoff, precisely because $F^{2i} = F^{2i+1} = F^{2i+2} = 0$. This extra
communication is given for free, thus this is favorable to the trade-off of the
algorithm. Despite this we will show that even the regret of this algorithm has
to be large. This is done by a series of claims. Each of which are proved as
lemmas subsequently.

\begin{enumerate}
\item[] {\bf Claim A} Let $R^{\vecp_{(i)}}_{\alg}(1, T)$ denote the (random
variable) regret of playing according to algorithm, $\alg$, against payoff
sequence, $\vecp_{(i)}$ using randomness $r$, between time-steps $1$ and $T$.
Then, if $\E[R^{\vecp_{(i)}}_{\alg}(1, T)] = O(\sqrt{T})$ for all $1 \leq i \leq
T/(2\lambda)$, then $\E[|{\mathcal I}|] \geq \frac{T}{4\lambda}$. This fact is
proved in Lemma~\ref{lem:one}.
\item[] {\bf Claim B} Suppose, $i \in {\mathcal I}$, and let $C(i)$ be the
communication during the $i\th$ cycle. Then we can state the following regarding
the payoff on the rounds with respect to sequence $\vecp_{(0)}$ within the
$i\th$ cycle. Here $c_0$ is some absolute constant.
\[ \payoff^{\vecp_{(0)}}_{\alg}((4i+2)\lambda - \sqrt{T}/2 +1, (4i+4)\lambda +
\sqrt{T}/2) \leq \lambda + \sqrt{T}/2 - \frac{c_0 \sqrt{T}}{C(i)} \]
This fact is proved in Lemma~\ref{lem:two}.
\item[] {\bf Claim C} Let $t$ be a point such that communication happened just
after time step $t$. Let $\tau > t$ be a point such that $G(\tau) = G(t)$. Then
$\payoff^{\vecp_{(0)}}_{\alg}(t+1, \tau)\leq (\tau - t)/2$. This fact is proved
in Lemma~\ref{lem:three}.
\end{enumerate}
Now, let us calculate the regret of the algorithm. If the expected regret of the
algorithm with respect to sequence $\vecp_{(i)}$ for $i > 0$, is at most
$O(\sqrt{T})$, then it must be the case that $\E[|{\mathcal I}|] \geq
T/(4\lambda)$ (using {\bf Claim A} above). Now, we assumed that in the sequence
$\vecp_{(0)}$, expert $1$ eventually wins. Let ${\mathcal I}  = \{i_1, \ldots,
i_k \}$, where $i_1 < i_2 < \cdots < i_k$ and $\E[k] \geq T/(4\lambda)$. Then,
we add up the payoff of the algorithm as follows. First, (using {\bf Claim B} above)
notice that:
\begin{align}
\E[\payoff^{\vecp_{(0)}}_{\alg}((4i_j + 2)\lambda - \sqrt{T}/2 + 1, (4i_j +4
)\lambda + \sqrt{T}/2)] &\leq \lambda + \sqrt{T}_2  - \frac{c_0 \sqrt{T}}{C(i)}
\label{eqn:cp-app-one}
\end{align}
Then let $B_j$ denote the interval, $((4i_j + 4)\lambda + \sqrt{T}/2 + 1,
(4i_{j+1} + 2)\lambda - \sqrt{T}/2)$, \ie between the $i\th$ and the $j\th$
cycle. Also, let $B_0$ denote $(\sqrt{T}/2 +1, (4i_1 + 2)\lambda - \sqrt{T}/2)$
be the interval before the first cycle in ${\mathcal I}$, and let $B_{k} =
((4i_k + 4)\lambda + \sqrt{T}/2 + 1, T - \lambda - \sqrt{T}/2)$ denote the
interval after the last cycle. Now, using {\bf Claim C} above, we get that the payoff
received by algorithms in any interval $B_j$ is half the length of the interval.
Thus, the only time-steps that we have not accounted for is $(1, \sqrt{T}/2)$
and $(T - \lambda - \sqrt{T}/2 + 1, T)$. The total number of time-steps in these
two intervals is $\lambda$. Let us give the algorithm payoff $\lambda$ for free
on these time steps. Then, adding up everything and the payoff of the algorithm,
$\payoff^{\vecp_{(0)}}_{\alg}$ is a random variable defined over the space
measurable by $\{F^i\}_{i \geq 0}$ and $C$

\begin{align}
\payoff^{\vecp_{(0)}}_{\alg}(1, T) &\leq \frac{T}{2} + \frac{\lambda}{2} -
\sum_{j=1}^k \frac{c_0 \sqrt{T}}{C(i_j)} \nonumber 
\end{align}

Thus, we get
\begin{eqnarray*}
\E[R_{\alg}^{\vecp_{(0)}}\mid \{F^i\}_{i \geq 0}, C] & \geq & \E\left[\sum_{i
\in \mathcal I}\frac{\sqrt T}{C(i)}\mid \{F^i\}_{i \geq 0}, C\right] -
\frac{\lambda}{2} \quad \mbox{($\mathcal I$ is measurable by $\{F^i\}_{i \geq 1}$)} \\
& \geq & \E\left[\frac{|\mathcal I|^2 \sqrt T}{C}\mid \{F^i\}_{i \geq 0},
C\right] -\frac{\lambda}{2}\\
& \geq & c_0\frac{|\mathcal I|^2\sqrt T}{C} - \frac{\lambda}{2} \quad \mbox{($\mathcal I$ is measurable by $\{F^i\}_{i \geq 0}$)}
\end{eqnarray*}

We use Jensen's inequality and the fact that $C \geq \sum_{i \in {\mathcal I}}
C(i)$ to get the last inequality. Finally, using {\bf Claim A} and by setting
$\lambda$ appropriately, we get 
\[ \E[R^{\vecp_{(0)}}_{\alg}(1, T)] \geq c_0 T^{1.5
- 2\epsilon_1}{16C} \]
\end{proof}

We now prove the Lemmas mentioned in the above proof. 

\begin{lemma} \label{lem:one} If $\E[R^{\vecp_{(i)}}_{\alg}(1, T)] =
O(\sqrt{T})$ for all $1 \leq i \leq \frac{T}{2\lambda}$, then $\E[|{\mathcal
I}|] \geq \frac{T}{4 \lambda}$. 
\end{lemma}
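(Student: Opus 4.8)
The plan is to prove the contrapositive: if $\E[|\mathcal I|]$ falls short of $T/(4\lambda)$, then the algorithm must have $\omega(\sqrt T)$ expected regret on one of the branch sequences $\vecp_{(i)}$, $i\ge1$. Begin with a purely combinatorial reduction: every cycle index $i\notin\mathcal I$ witnesses a window index $j\in\{2i,2i+1,2i+2\}$ with $F^j=1$, and each window index $j$ belongs to at most two cycles, so
\[
\E\bigl[(\#\text{cycles})-|\mathcal I|\bigr]\ \le\ 2\sum_j\Pr[F^j=1].
\]
Since the number of cycles is $\Theta(T/\lambda)$ (and, by the $T=(4m_1+3)\lambda$ normalisation, a hair above $T/(4\lambda)$), it suffices to show that for every relevant window index $j$ one has $\Pr[F^j=1]=o(\lambda/T)$ --- i.e. the algorithm communicates inside each window with overwhelming probability --- and then the claimed lower bound on $\E[|\mathcal I|]$ follows.

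So fix a window index $j$ and recall that $\vecp_{(j+1)}$ agrees with the zig-zag $\vecp_{(0)}$ on the whole prefix up to its branch point $(2j+1)\lambda$, which lies in the interior of window $j$, and is afterwards constant in favour of a single expert. In the coordinator-prediction model the coordinator's state changes only at communications, and the site's communication decisions through time $(2j+1)\lambda$ depend only on this common prefix; hence, conditioned on the algorithm's random string, the two executions coincide up to the branch point. I would then charge regret on $\vecp_{(j+1)}$: if no communication occurs in window $j$, the coordinator runs $\FPL(\sqrt T)$ throughout the post-branch portion of the window using a stale cumulative-payoff vector of magnitude at most $\lambda$, so its prediction is near-uniform (or, worse, committed to the stale leader) over a stretch of $\Theta(\lambda)$ steps all won by the same single expert; since in the relevant regime $\lambda$ is polynomially larger than the noise level $\sqrt T$, this costs $\Omega(\lambda)$ regret up to an additive $O(\sqrt T)$. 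Consequently $\E[R^{\vecp_{(j+1)}}_{\alg}(1,T)]\ \ge\ \Omega(\lambda)\cdot\Pr[F^j=1]-O(\sqrt T)$, and since the left side is $O(\sqrt T)$ while $\lambda/\sqrt T\to\infty$ we get $\Pr[F^j=1]=o(\sqrt T/\lambda)$, which is $o(\lambda/T)$ for the parameters in force.

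The step I expect to be the main obstacle is exactly this transfer --- turning a no-communication event measured on input $\vecp_{(0)}$ (which is how $F^j$ and $\mathcal I$ are defined) into a regret statement about the different input $\vecp_{(j+1)}$. The snag is that the site sees the payoffs, so its communication pattern on $\vecp_{(j+1)}$ can diverge from that on $\vecp_{(0)}$ exactly at the branch point, i.e. precisely where control is wanted; one must also rule out the coordinator being saved by a stale-but-favourable cumulative payoff, which is where the alternating directions of the three windows attached to a cycle come in. Two routes around the difficulty: (i) exploit only communications at or before the branch point, where the executions provably agree, and separately argue that postponing the first ``useful'' communication far past the branch is itself too expensive on $\vecp_{(j+1)}$; or (ii) hand the algorithm, for free, the four extra communications at the window boundaries of every cycle in $\mathcal I$ --- the device already used in the proof of Theorem~\ref{thm:cp_lb_obliv_k}, which only increases payoff precisely because $F^{2i}=F^{2i+1}=F^{2i+2}=0$ there --- and run the charging argument against this enhanced algorithm. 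Finally, one must keep the constants aligned so that the accumulated $o(\cdot)$ losses still leave $\E[|\mathcal I|]\ge T/(4\lambda)$ for the particular $\lambda$ used in the theorem.
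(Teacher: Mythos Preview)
Your reduction has a real gap at exactly the point you flag, and the workarounds you sketch do not close it.

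The core claim is that the event $F^j=1$ (on $\vecp_{(0)}$) forces $\Omega(\lambda)$ regret on $\vecp_{(j+1)}$. But even restricting to the pre-branch half of window $j$ (where the coupling is valid), all you learn is that the coordinator's most recent communication occurred at some time $t^*\le 2j\lambda+\sqrt T/2$; you have no control over the \emph{sign} of the stale value $G^{\vecp_{(0)}}(t^*)$. Since $|G|$ can be as large as $\lambda\gg\sqrt T$, the coordinator may be firmly committed to one expert---and that expert may well be the \emph{correct} one on $\vecp_{(j+1)}$. In that case the post-branch regret is $O(\sqrt T)$, not $\Omega(\lambda)$, and your inequality $\E[R^{\vecp_{(j+1)}}_{\alg}]\ge\Omega(\lambda)\Pr[F^j=1]-O(\sqrt T)$ fails. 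Your fix (i) (``exploit only communications at or before the branch point'') does not help, because knowing merely that the last communication was before the window says nothing about which expert it favours. Fix (ii) is also beside the point: the free communications you add live at cycles already in $\mathcal I$, whereas the goal is to control cycles \emph{not} in $\mathcal I$.

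This is precisely why the paper's proof does not try to bound each $\Pr[F^j=1]$ in isolation. Instead, it derives conditional inequalities of the form
\[
\E\bigl[R^{\vecp_{(m)}}_{\alg}\ \big|\ \{F^i\}\bigr]\ \ge\ c_0\,\lambda\cdot\max_{\substack{i<m\\ i\ \text{of fixed parity}}}\Bigl\{(1-F^i)\,F^{m-1}\!\!\prod_{j=i+1}^{m-2}F^j\Bigr\},
\]
which encode \emph{both} that there is no recent communication \emph{and} that the last communication (at window $i$) occurred at a parity that makes the stale leader the wrong expert for $\vecp_{(m)}$. With this in hand, the paper runs a case analysis on the last few bits of $(F^i)$ near a given cycle $m$ (fourteen suffix events, split further by the parity of the most recent zero), showing that every pattern other than $F^{2m}=F^{2m+1}=F^{2m+2}=0$ has probability at most $T^{-\epsilon_2}$, since otherwise one of the $\vecp_{(m')}$ would have $\omega(\sqrt T)$ regret. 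Summing gives $\Pr[m\in\mathcal I]\ge 1-13T^{-\epsilon_2}\ge 1/2$, hence the bound on $\E[|\mathcal I|]$. The parity bookkeeping---absent from your argument---is the essential ingredient that rules out the ``stale leader happens to be right'' scenario.

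A secondary issue: your counting needs the number of cycles to exceed $T/(4\lambda)$ by a margin absorbing $\sum_j\Pr[F^j=1]$, but with $T=(4m_1+3)\lambda$ the number of cycle indices is essentially $m_1\approx T/(4\lambda)$, not ``a hair above'' it, so the constants would need separate care even if the main step went through.
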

\begin{proof}
Our crucial observation here is that when the random tosses of the algorithm is
fixed, the algorithm will have identical behavior against the reward sequences
$\vecp_{(0)}$ and $\vecp_{(m)}$ for any $1 \leq m \leq \frac{T}{2\lambda}$ up to
time $2m\lambda - \lambda$. Thus, if we couple the process for executing $\alg$
against $\vecp_{(0)}$ with the one for executing $\alg$ against $\vecp_{(m)}$
with the same random tosses in the algorithm, we are able to relate the random
variables $\{F^i\}_{i \geq 0}$ with the regrets for other reward sequences.
Specifically, it is not difficult to see that
\begin{equation}\label{eqn:even}
\E[R_{\alg}^{\vecp_{(m)}}(1, 2m\lambda+1)\mid \{F^i\}_{i \geq 0}] \geq c_0
\max_{i \mbox{ odd}}\left\{(1-F^i)F^{m - 1}\left(\prod_{j = i + 1}^{m -
2}F^j\right)\right\}\cdot \lambda
\end{equation}
when $m$ is odd and
\begin{equation}\label{eqn:odd}
\E[R_{\alg}^{\vecp_{(m)}}(1, 2m\lambda+1)\mid \{F^i\}_{i \geq 0}] \geq c_0
\max_{i \mbox{ even}}\left\{(1-F^i)F^{m - 1}\left(\prod_{j = i + 1}^{m -
2}F^j\right)\right\}\cdot \lambda
\end{equation}
when $m$ is even.

We may then use this observation to prove Lemma~\ref{lem:two}. Let $m$ be an arbitrary number. We shall show that
$\Pr[m \in \mathcal I] \geq \frac 1 2$.

Let us define the event $\mathcal E(s)$ be the event so that the suffix of $\{F^i\}_{1 \leq i \leq m}$ is $s$. For example,
$\mathcal E(000)$ represents the event that $F^{m - 2} = F^{m - 1} = F^m = 0$. Let partition the probability space into the following events:
{\small
$$\mathcal E(000),
\mathcal E(001),
\mathcal E(010),
\mathcal E(011),
\mathcal E(0100),
\mathcal E(01100),
\mathcal E(11100),
\mathcal E(101),
\mathcal E(0110),
\mathcal E(1110), \mbox{ and }
\mathcal E(111).$$}
Furthermore, we let $\mathcal E_0(01100)$ be the subset of $\mathcal E(01100)$ such that
the last zero in the sequence $F^0, ..., F^{m - 5}$ has an even index. And let $\mathcal E_1(01100) = \mathcal E(01100) - \mathcal E_0(01100)$.
Similarly, we let
\begin{itemize}
\item $\mathcal E_0(1110)$ be the subset of $\mathcal E(1110)$ such that
the last zero in the sequence $F^0, ..., F^{m - 4}$ has an even index; let $\mathcal E_1(1110) = \mathcal E(1110) - \mathcal E_0(1110)$
\item $\mathcal E_0(111)$ be the subset of $\mathcal E(111)$ such that
the last zero in the sequence $F^0, ..., F^{m - 3}$ has an even index; let $\mathcal E_1(111) = \mathcal E(111) - \mathcal E_0(111)$
\end{itemize}
Now the whole probability space can be partitioned into the following events: $\mathcal E(000)$,
$\mathcal E(001)$,
$\mathcal E(010)$,
$\mathcal E(011)$,
$\mathcal E(0100)$,
$\mathcal E(01100)$,
$\mathcal E_0(11100),\mathcal E_1(11100)$
$\mathcal E(101)$,
$\mathcal E(0110)$,
$\mathcal E_0(1110), \mathcal E_1(1110)$
$\mathcal E_0(111), \mathcal E_1(111)$.

Let $\epsilon_2$ be an arbitrary constant such that $0 < \epsilon_2 <
\epsilon_1$.  It is not difficult to see that if any of the events above, except
for $\mathcal E(000)$, happens with probability at least $T^{-\epsilon_2}$, then
one of $\vecp_i$ will have $\omega(\sqrt T)$ regret. We will just examine one
event to illustrate the idea. The rest of them can be verified in a similar way.
Suppose $\Pr[\mathcal E(001)] \geq T^{-\epsilon_2}$, we have \begin{eqnarray*}
\E[R_{\alg}^{\vecp_{m - 1}}(1,T)] & \geq & \E[R_{\alg}^{\vecp_{m - 1}}(1,T)\mid
\mathcal E(001)]\Pr[\mathcal E(001)] \\ & \geq &  \E[R_{\alg}^{\vecp_{m -
1}}(1,T)\mid \mathcal E(001)]\Pr[\mathcal E(001)]  \\ & = & \omega(\sqrt T)
\quad \mbox{(By (\ref{eqn:even}) and (\ref{eqn:odd}))}.  \end{eqnarray*} Thus,
we can conclude that $\Pr[\mathcal E(000)] \geq 1 - 13T^{-\epsilon_2} \geq \frac
1 2$ for sufficiently large $T$, which concludes our proof.
\end{proof}

\begin{lemma} Let $i \in {\mathcal I}$, and let $C(i)$ denote the communication
in the $i\th$ cycle. Then,
\[ \E[\payoff^{\vecp_{(0)}}_{\alg}((4i+2)\lambda- \sqrt{T}/2 + 1, (4i+4)\lambda +
\sqrt{T}/2)] \leq \lambda + \sqrt{T}/2 - \frac{c_0 \sqrt{T}}{C(i)} \]
\label{lem:two}
\end{lemma}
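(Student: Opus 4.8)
The plan is to analyze the behavior of the $\FPL(\sqrt{T})$ forecaster run by the coordinator on the portion of the sequence $\vecp_{(0)}$ lying inside the $i\th$ cycle, using the fact that $i \in {\mathcal I}$ to control what the coordinator actually knows. Recall that the $i\th$ cycle is the interval $[(4i+2)\lambda - \sqrt{T}/2 + 1, (4i+4)\lambda + \sqrt{T}/2]$, and that on $\vecp_{(0)}$ the function $G^{\vecp_{(0)}}$ goes up by $\lambda$ then down by $\lambda$ (or vice versa) within a $2\lambda$-long segment. The key point of $i \in {\mathcal I}$ (i.e.\ $F^{2i} = F^{2i+1} = F^{2i+2} = 0$, together with the free extra communication we granted at the four boundary time-steps $(4i+2)\lambda \pm \sqrt{T}/2$ and $(4i+4)\lambda \pm \sqrt{T}/2$) is that at the very start of the $i\th$ cycle the coordinator has the \emph{exact} cumulative payoff vector, so its perturbed-leader decisions inside the cycle are governed only by the $C(i)$ communications that occur during the cycle plus the (small) within-segment drift of $G^{\vecp_{(0)}}$.

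First I would split the cycle at its $C(i)$ internal communication points into at most $C(i)+1$ sub-intervals. On each sub-interval the coordinator's estimate $\tilde\vecP$ of the cumulative payoff is \emph{frozen} at whatever value it had at the last communication, so the FPL distribution over the two experts is a fixed (random, through $r$) quantity throughout the sub-interval. Now use the structure of $\vecp_{(0)}$: within the cycle the true sign of $g^{\vecp_{(0)}}$ flips between $+1$ and $-1$ on blocks of length $\lambda$, so on any sub-interval that is ``long'' compared to the FPL scale $\sqrt T$, there is a block of $\lambda$ steps on which the currently-favored expert (according to the frozen estimate) is in fact the \emph{wrong} one, and since the FPL distribution puts probability $\ge$ constant on that favored expert (the perturbation range is $\sqrt T$, cumulative gaps inside a frozen window are $O(\lambda + \sqrt T)$, hence the two experts each get $\Omega(1)$ probability as long as $\lambda = o(\sqrt T)$), the forecaster loses a constant fraction of a payoff-per-step relative to the ideal $1/2$, accumulating $\Omega(\min(\lambda,\ \text{sub-interval length}))$ net loss on that sub-interval. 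Quantitatively, averaging over the $\le C(i)+1$ sub-intervals of total length $2\lambda + \sqrt T$, the worst-case arrangement the algorithm can choose still leaves $\Omega(\sqrt T / C(i))$ total shortfall; combined with the trivial bound that the best-possible payoff on the cycle is $\lambda + \sqrt T/2$ (the net ``budget'' of $+1$'s visible in $G^{\vecp_{(0)}}$ restricted to the cycle is $\lambda$, plus the $\sqrt T/2$ slack at each end), this gives the claimed $\lambda + \sqrt T/2 - c_0\sqrt T / C(i)$ bound on $\E[\payoff^{\vecp_{(0)}}_{\alg}]$ over the cycle.

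The main obstacle I expect is the quantitative step bounding the per-sub-interval regret from below uniformly in the adversary's (and algorithm's) choice of where the $C(i)$ communications fall and what the frozen estimate is: one has to argue that no matter how the $C(i)$ break-points are placed inside a window of width $\approx 2\lambda$, and no matter the frozen FPL bias, the total shortfall is $\Omega(\sqrt T / C(i))$ — this is essentially a convexity/worst-case-packing argument (evenly spaced break-points are worst for the algorithm's \emph{opponent}, i.e.\ best for the algorithm, giving sub-intervals of length $\approx 2\lambda/(C(i)+1)$, each contributing $\Omega(2\lambda/(C(i)+1))$ loss when that exceeds the FPL mixing scale, and one then has to handle the regime where sub-intervals are shorter than $\sqrt T$ separately, using that then the FPL distribution barely changes and the fixed $\lambda$-periodic sign flips of $\vecp_{(0)}$ guarantee a constant-fraction loss). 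Packaging the boundary slack $\sqrt T/2$ correctly and making sure the ``free'' extra communication at the four cycle-boundary points genuinely lets us assume an exact estimate at cycle start (this is where $F^{2i}=F^{2i+1}=F^{2i+2}=0$ is used) are the remaining bookkeeping points; I would defer the sharpest form of the constant $c_0$ and simply track that it is absolute.
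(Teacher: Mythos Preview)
Your proposal misses the mechanism that actually produces the $c_0\sqrt{T}/C(i)$ shortfall, and it rests on a parameter assumption that is backwards. You write that the frozen FPL distribution puts $\Omega(1)$ mass on each expert ``as long as $\lambda = o(\sqrt T)$'', but in the construction $G^{\vecp_{(0)}}$ oscillates between $-\lambda$ and $+\lambda$, and the transition zones $[(4i+2)\lambda \pm \sqrt T/2]$ only make sense if $\lambda \ge \sqrt T/2$. So in fact $\lambda \ge \sqrt T/2$, and on most of the cycle $|G|$ can be far larger than $\sqrt T$, so the FPL distribution is \emph{not} bounded away from $0$ and $1$. Your ``constant-fraction loss because the favored expert is sometimes wrong'' argument therefore does not go through; moreover, even where the FPL distribution is near $(1/2,1/2)$, the expected per-step payoff is exactly $1/2$ regardless of which expert is correct, so there is no shortfall relative to the $1/2$-per-step baseline that your accounting uses. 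Your packing sketch (evenly spaced break-points, each sub-interval of length $\approx 2\lambda/(C(i)+1)$ contributing loss $\Omega(2\lambda/(C(i)+1))$) sums to $\Omega(\lambda)$, independent of $C(i)$, which is not the claimed bound.

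The paper's proof takes a different decomposition. It splits the cycle into the middle piece $[(4i+2)\lambda + \sqrt T/2,\,(4i+4)\lambda - \sqrt T/2]$, on which $G$ starts and ends at the same value and Lemma~\ref{lem:three} bounds the payoff by half the length, i.e.\ $\lambda - \sqrt T/2$; and the two boundary ``transition zones'' of width $\sqrt T$ around $(4i+2)\lambda$ and $(4i+4)\lambda$, on each of which $G$ moves \emph{monotonically} through $[-\sqrt T/2,\sqrt T/2]$. On a transition zone the payoff is a left Riemann sum $\sum_j w(G(T_j))(T_{j+1}-T_j)$ of the FPL probability function $w$; the integral $\int_{-\sqrt T/2}^{\sqrt T/2} w(x)\,dx$ equals $\sqrt T/2$, and since $w$ is increasing with $w'(x)\ge 1/(2\sqrt T)$ on this range, a calculus claim (left Riemann sum with $c$ nodes falls short of the integral by at least $L(b-a)^2/c$) gives a deficit of order $\sqrt T/C(i)$ on each transition zone. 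Summing the three pieces yields $\lambda + \sqrt T/2 - c_0\sqrt T/C(i)$. The entire $\sqrt T/C(i)$ term comes from this Riemann-sum error on the monotone slope, not from sign flips of $g$ inside a frozen window.
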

\begin{proof}
Actually, using Lemma~\ref{lem:three} it is easy to see that
$\E[\payoff^{\vecp_{(0)}}_{\alg}((4i+2)\lambda + \sqrt{T}/2 + 1,(4i+4)\lambda -
\sqrt{T}/2)] \leq \lambda - \sqrt{T}/2$. Now, let us consider the interval,
$((4i+2)\lambda - \sqrt{T}/2 +1, (4i+2)\lambda + \sqrt{T}/2)$. Let $T_0 =
(4i+2)\lambda - \sqrt{T}/2, T_1, \ldots, T_c = (4i+2)\lambda + \sqrt{T}/2$, be
the time-steps when communication occurs. Note that the communication at
time-steps $T_0$ and $T_c$ is for free, and that $c \leq C(i)$. Let $w(x)$
denote the probability of picking the first expert according to follow the
perturbed leader ($\FPL$), if the $x$ is the difference between the cumulative payoff of
the first and second expert so far. Thus, if $x = -\sqrt{T}$, $w(x) = 0$ and if
$x = \sqrt{T}$, $w(x) = 1$. We have, 
\begin{align*} w(x) = \begin{cases} 1 & x > \sqrt{T} \\ 1 - \frac{1}{2} \left( 1 -
\frac{x}{\sqrt{T}} \right)^2 & 0 \leq x \leq \sqrt{T} \\ \frac{1}{2} \left(1
+ \frac{x}{\sqrt{T}}\right)^2 & -\sqrt{T} \leq x \leq 0 \\ 0 & x < - \sqrt{T}
\end{cases}\end{align*}
Then, we have
\begin{align*}
\E[\payoff^{\vecp_{(0)}}_{\alg}((4i+2)\lambda - \sqrt{T}/2 +1, (4i+2)\lambda +
\sqrt{T}/2)] &= \sum_{j=0}^{c-1} w(G^{\vecp_{(0)}}(T_j)) (T_{j+1} - T_j)
\end{align*}

We use the following claim (which is an exercise in simple calculus) to complete
the proof. 

\begin{claim} Let $f : [a, b] \rightarrow \reals^+$ be an increasing function
such that $f^\prime(x) \geq L$ on $[a, b]$. Let $x_0 = a < x_1 < \cdots x_c =
b$, then 
\[ \sum_{j=0}^{c-1} f(x_j)(x_{j+1} - x_j)  \leq \int_{a}^b f(x) dx -
\frac{L(b-a)^2}{c} \]
\end{claim}

Now, notice that $G^{\vecp_{(0)}}(T_0) = -\sqrt{T}/2$, $G^{\vecp^{(0)}}(T_c) =
\sqrt{T}/2$, and $\int_{-\sqrt{T}/2}^{\sqrt{T}/2} w(x) dx = \sqrt{T}/2$. Also,
$w^\prime(x) \geq 1/(2\sqrt{T})$. Thus, applying the above claim, we get
\begin{align*}
\E[\payoff^{\vecp_{(0)}}_{\alg}((4i+2)\lambda - \sqrt{T}/2 +1, (4i+2)\lambda +
\sqrt{T}/2)] &= \sum_{j=0}^{c-1} w(G^{\vecp_{(0)}}(T_j)) (T_{j+1} - T_j)
&\leq \sqrt{T}/2 - \frac{c_0 \sqrt{T}}{C(i)}
\end{align*}
Similarly, we can prove that.
\begin{align*}
\E[\payoff^{\vecp_{(0)}}_{\alg}((4i+4)\lambda - \sqrt{T}/2 +1, (4i+4)\lambda +
\sqrt{T}/2)] &= \sum_{j=0}^{c-1} w(G^{\vecp_{(0)}}(T_j)) (T_{j+1} - T_j)
&\leq \sqrt{T}/2 - \frac{c_0 \sqrt{T}}{C(i)}
\end{align*}
Adding up across the three intervals, we can complete the proof the lemma.
\end{proof}

Finally, we prove the following:

\begin{lemma} \label{lem:three} Let $\{T_i\}_{i \geq 1}$ be point where
communication occurs in the algorithm $\alg$. Pick some $T_i$ and let $\tau >
T_i$, be such that $G^{\vecp_{(0)}}(\tau) = G^{\vecp_{(0)}}(T_i)$.  Then,
$\payoff^{\vecp_{(0)}}_{\alg}(T_i+1, \tau) \leq (T_i - t)/2$.
\end{lemma}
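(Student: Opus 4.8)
The plan is to track the algorithm's \emph{expected per-round payoff} on the sequence $\vecp_{(0)}$, expressed through the probability that $\FPL(\sqrt T)$ assigns to expert $1$, and then telescope using monotonicity of that probability. First I would set $g(t)=\vecp^t[1]-\vecp^t[2]\in\{-1,+1\}$, so $G^{\vecp_{(0)}}(t)=\sum_{\tau\le t}g(\tau)$, and use that on these sequences exactly one expert gets payoff $1$ each round, i.e. $\vecp^t[1]+\vecp^t[2]=1$. By the definition of the admissible algorithms (Section~\ref{sec:cp}), between two consecutive communications the coordinator runs $\FPL(\sqrt T)$ on a \emph{fixed} cumulative payoff vector, and immediately after a communication at time $s$ that vector is the true one, which encodes $G^{\vecp_{(0)}}(s)$. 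Writing $\hat x_t$ for the cumulative payoff difference the coordinator holds when it decides at round $t$ (a random quantity, measurable with respect to the history before round $t$), it is therefore constant on each maximal communication-free interval, and, conditioning on that history, the expected payoff at round $t$ is $w(\hat x_t)$ if $\vecp^t=(1,0)$ and $1-w(\hat x_t)$ if $\vecp^t=(0,1)$ — in both cases $\tfrac12+\tfrac12(2w(\hat x_t)-1)\,g(t)$, where $w$ is the $\FPL$ selection probability defined in the proof of Lemma~\ref{lem:two}.

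Next I would sum over $t=T_i+1,\dots,\tau$. Let $s_0=T_i<s_1<\dots<s_m$ be the communication times in $[T_i,\tau)$, put $s_{m+1}=\tau$ and $y_j=G^{\vecp_{(0)}}(s_j)$; since $\hat x_t=y_j$ for $t\in(s_j,s_{j+1}]$, telescoping the $g$'s over each sub-interval gives
\[
\E\!\left[\payoff^{\vecp_{(0)}}_{\alg}(T_i+1,\tau)\ \big|\ \{s_j\}\right]=\frac{\tau-T_i}{2}+\frac12\sum_{j=0}^{m}\bigl(2w(y_j)-1\bigr)\,(y_{j+1}-y_j),
\]
where $y_{m+1}=G^{\vecp_{(0)}}(\tau)=G^{\vecp_{(0)}}(T_i)=y_0$ by the hypothesis of the lemma. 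The key step is to show this sum is $\le 0$: define $W(y)=\int_0^y(2w(z)-1)\,\dd z$. Since $w$ is nondecreasing, $2w-1$ is nondecreasing, so $W$ is convex and hence $W(y_{j+1})-W(y_j)\ge W'(y_j)(y_{j+1}-y_j)=(2w(y_j)-1)(y_{j+1}-y_j)$; summing and telescoping yields $\sum_j(2w(y_j)-1)(y_{j+1}-y_j)\le W(y_{m+1})-W(y_0)=0$. Because $G^{\vecp_{(0)}}$ is deterministic (the adversary is oblivious and $\vecp_{(0)}$ fixed) and $y_{m+1}=y_0$ always, this inequality holds for \emph{every} realization of $\{s_j\}$, so taking expectation over the algorithm's coins gives $\E[\payoff^{\vecp_{(0)}}_{\alg}(T_i+1,\tau)]\le(\tau-T_i)/2$ (the stated right-hand side $(T_i-t)/2$ being a typo for $(\tau-T_i)/2$).

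The one genuinely delicate point, and the step I expect to need the most care, is the first paragraph: justifying from the structural assumptions on the admissible algorithms that $\hat x_t$ is piecewise constant and resets to the \emph{exact} cumulative difference at each communication, and then handling its randomness cleanly by conditioning on the round-$(t-1)$ history before applying linearity of expectation. Once that is in place, the convexity/telescoping argument is routine, and it is worth remarking that its only structural input — monotonicity of $w$ — holds for $\FPL$ at any noise level and, more generally, for any regularized-leader rule, which is why the same proof (hence Theorem~\ref{thm:cp_lb_obliv_k}) goes through for other regularizers.
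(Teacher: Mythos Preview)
Your argument is correct and cleaner than the paper's. The paper proves the lemma by constructing an auxiliary sequence $\vecp'$ that replaces each inter-communication stretch by a monotone ``shortest path'' with the same endpoints, then argues via a three-case analysis (depending on whether the original stretch contains zero, one, or two zig-turns of the zig-zag) that running fully-communicating $\FPL$ on $\vecp'$ has no smaller regret than $\alg$ on $\vecp_{(0)}$; since the endpoints agree, the Full regret is nonnegative, giving the payoff bound. Your approach bypasses the auxiliary sequence and the zig-zag case analysis entirely: you write the expected per-round payoff as $\tfrac12+\tfrac12(2w(\hat x_t)-1)g(t)$, telescope over communication-free intervals to get $\sum_j(2w(y_j)-1)(y_{j+1}-y_j)$, and bound this by $W(y_{m+1})-W(y_0)=0$ via convexity of the antiderivative $W$. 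This is shorter, uses nothing about the specific zig-zag shape of $\vecp_{(0)}$ (only that each round has one winner and $G(\tau)=G(T_i)$), and makes transparent why the argument extends to any regularizer whose selection probability $w$ is monotone---exactly the generalization the paper asserts after Theorem~\ref{thm:cp_lb_obliv_k} without proof. The one place to be a bit more explicit than you are is the conditioning: since future communication times $s_{j}$ could in principle be correlated with earlier $\FPL$ draws, you should note that in this model the \emph{site} decides when to communicate (using only payoffs and its own coins), while the coordinator draws fresh $\FPL$ noise each round, so the round-$t$ choice is independent of $\{s_j\}$ given $\hat x_t$; your final paragraph flags this but it deserves one sentence of justification rather than a caveat.
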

\begin{proof}
We will instead show that $\E[R^{\vecp_{(0)}}_{\alg}(T_i+1, \tau)] \geq 0$ and
observe that both experts have equal payoffs in the time-steps $(T_i+1, \tau)$
since, $G^{\vecp_{(0}}_{\alg}(T_i) = G^{\vecp_{(0}}_{\alg}(\tau)$.

We shall construct a new reward sequence $\vecp'$ such that
\begin{itemize}
\item $\vecp'^t = \vecp^t_0$ for all $t \leq T_i$.
\item There exists a $\tau' > T_i$ such that
$$\vecp'^{\tau'} = \vecp^{\tau}_0 = \vecp^{T_i}_0 \quad \mbox{ and } \quad \E \rfull^{\vecp'}(T_i + 1, \tau') \leq \E R^{\vecp}_{\alg}(T_i + 1, \tau).$$
\end{itemize}

In other words, we first construct a new sequence. Then we argue that the local
regret by using $\proc{Full}$ over the new sequence is better than the original
regret. Here, $\proc{Full}$ is an implementation of $\FPL$ that communicates at
every time step (essentially a non-distributed version). Finally, it is not
difficult to see that $\E \rfull^{\vecp'}(T_i + 1, \tau') \geq 0$ because
$G^{\vecp'}(T_i + 1) =G^{\vecp'}(\tau')$, which would complete the proof of the
Lemma.

Let $T_{\ell}$ be the largest communicated time step that is no larger than
$\tau$. We use the algorithmic procedure described in Figure~\ref{alg:transform}
to construct the new sequence. Notice that our construction gives the function
$G^{\vecp'}$, which indirectly gives $\vecp'$.

\begin{figure}
\begin{center}
\fbox{
\begin{minipage}{0.60 \textwidth}
\begin{tabbing}
aaa\=aaa\=aaa\=aaa\=aaa\= \kill
{\bf original sequence}: $\vecp_0$ \\
{\bf new sequence}: $\vecp'$. \\ \\
{\bf set} $\vecp'^t = \vecp^t_0$ for all $t \leq T_i$ \\
{\bf set} $t = T_i + 1$ \\
{\bf for} $j = 1 \ldots, \ell - 1$ \+ \\
{\bf for} $\rho = G^{\vecp_0}(T_j + 1) \ldots G^{\vecp_0}(T_{j + 1})$ \+ \\
{\bf set} $G^{\vecp'}(v) = \rho$, $t = t + 1$ \\
{\bf set} t(j + 1) = t \- \\
{\bf} \- \\
{\bf for} $\rho = G^{\vecp_0}(T_{\ell} + 1) \ldots G^{\vecp_0}(\tau)$ \+ \\
{\bf set} $G^{\vecp'}(v) = \rho$, $t = t + 1$,  \\
{\bf} \- \\
{\bf set} $\tau' = t$.
\end{tabbing}

\end{minipage}
}
\end{center}\caption{\label{alg:transform} Algorithm to construct a sequence in
Lemma~\ref{lem:three}}
\end{figure}

Roughly speaking, our new $\vecp'$ uses the ``shortest path'' to connect between $G(T_j)$ and $G(T_{j + 1})$ for all $T_j$ between $T_i$ and $T_{\ell}$. Then $\vecp'$ is concatenated with another ``shortest path'' from $T_{\ell}$ to $\tau$. For the purpose of our analysis, we also let $t(j)$ be the new time step in $\vecp'$ that corresponds with the old $T_j$ in $\vecp_0$. We shall prove the following two statements,
\begin{itemize}
\item For any $i \leq j \leq \ell - 1$,
\begin{equation}\label{eqn:localeqn}
\E[R^{\vecp_0}_A(T_{j} +1 , T_{j +1}) \mid \{T_i\}_{i \geq 1}] \geq \E \rfull^{\vecp'}(t(j) + 1, t(j+1)).
\end{equation}
\item Also,
\begin{equation}\label{eqn:localeqn2}
\E[R^{\vecp_0}_A(T_{\ell} +1 , \tau) \mid \{T_i\}_{i \geq 1}] \geq \E \rfull^{\vecp'}(t(\ell) + 1, \tau').
\end{equation}
\end{itemize}
One can see that these two statements are sufficient to prove our claim:
\begin{eqnarray*}
\E[R^{\vecp}_{\alg}(T_i + 1, \tau)\mid \{T_i\}_{i \geq 1}] &
\geq & \sum_{j = 1}^{\ell - 1}\E[\rfull^{\vecp'}(t(j) + 1, t(j+1))] + \E[\rfull^{\vecp'}(t(\ell) + 1, \tau')] \\
& = & \E[\rfull^{\vecp'}(T_i + 1, \tau')] \\
& \geq & 0.
\end{eqnarray*}
We now move to prove (\ref{eqn:localeqn}) and(\ref{eqn:localeqn2}). Specifically, we only demonstrate the proof of (\ref{eqn:localeqn}) and the proof for (\ref{eqn:localeqn2}) would be similar.

Without loss of generality, we may assume that $T_{j + 1} - T_j \leq 4\lambda$ for any $i \leq j \leq \ell - 1$ since if within one whole cycle there is no communication, the expected regret for this cycle is $0$.

We consider the following three cases.

\noindent{Case 1.} $T_j$ and $T_{j + 1}$ are on the same slope of a cycle (i.e. $G(t)$ is monotonic between $T_j$ and $T_{j + 1}$). In this case, $t(j + 1) - t(j) = T_{j - 1} - T_j$. With straightforward calculation, we can see that $\proc{Full}$ is always better on $\vecp'$. \\

\noindent{Case 2.} There is only one zig-turn (namely, at time $T_z$) between $T_j$ and $T_{j + 1}$. Furthermore, we may assume $|T_z - T_j| \geq |T_z - T_{j + 1}|$. The other case can be proved similarly. Let $T'_{j + 1} = T_z - |T_z - T_{j + 1}|$.
The crucial observation here is that $G^{\vecp}(T'_{j + 1}) = G^{\vecp}(T_{j + 1})$. Since there is no communication between time $T_{j + 1}+1$ and $T'_{j + 1}$, the expected regret in this region is $0$, i.e.
$$\E[R^{\vecp}_{\alg}(T'_{j + 1}, T_{j + 1})\mid \{T_{i}\}_{i \geq 1}] = 0.$$
On the other hand, since $T'_{j + 1}$ and $T_j$ are on the same slope, running a full communication algorithm is strictly better between $T_j$ and $T'_{j + 1}$ Finally, notice that the sub-interval $G^{\vecp'}(t(j) + 1)$, ...$G^{\vecp'}(t(j+1))$ is identical to $G^{\vecp}(T_j+1), ..., G^{\vecp}(T'_{j + 1})$ by construction, we have
$$\E[\rfull^{\vecp'}(t(j)+1, t(j+1))] \geq \E[R^{\vecp}_{\alg}(T_j + 1, T'_{j + 1})] = \E[R^{\vecp}_{\alg}(T_j + 1, T_{j + 1})].$$
\\
\noindent{Case 3.} There are two zig-turns (namely $T_z$ and $T_{z'}$) between $T_j$ and $T_{j + 1}$. Let $T'_j = 2T_z - T_j$ and $T'_{j + 1} = 2T_{z'} - T_{j + 1}$. Without loss of generality, let us assume that $T'_j < T'_{j + 1}$. Our observation here is that the expected regret between $T_j + 1$ and $T'_j$ for $\alg$ is $0$. Furthermore, the expected regret between $T'_{j + 1} + 1$ and $T_{j + 1}$ is also $0$. Then we can apply the arguments appeared in Case 2 again here to show that running $\proc{Full}$ for the intervals $T'_j + 1$ and $T'_{j + 1}$ is strictly better than running $\alg$. Then we can conclude that
$\E[\rfull^{\vecp'}(t(j)+1, t(j+1))] \geq \E[R^{\vecp}_{\alg}(T_j + 1, T_{j + 1})]$ for this case as well.

\end{proof}

\end{document}